\documentclass{article}

\PassOptionsToPackage{numbers,sort&compress}{natbib}
\usepackage[final]{neurips_2026}

\usepackage[utf8]{inputenc}
\usepackage[T1]{fontenc}
\usepackage{hyperref}
\usepackage{url}
\usepackage{booktabs}
\usepackage{amsfonts,amsmath,amssymb,amsthm,mathtools}
\usepackage{nicefrac}
\usepackage{microtype}
\usepackage{xcolor}
\usepackage{graphicx}
\usepackage{multirow}
\usepackage[algo2e,ruled]{algorithm2e}
\usepackage[capitalize,noabbrev]{cleveref}

\theoremstyle{plain}
\newtheorem{theorem}{Theorem}
\newtheorem{lemma}[theorem]{Lemma}
\newtheorem{proposition}[theorem]{Proposition}
\newtheorem{corollary}[theorem]{Corollary}
\newtheorem{assumption}[theorem]{Assumption}
\crefname{corollary}{Corollary}{Corollaries}\Crefname{corollary}{Corollary}{Corollaries}
\crefname{assumption}{Assumption}{Assumptions}\Crefname{assumption}{Assumption}{Assumptions}
\newcommand{\key}[2]{{\color[HTML]{#1}\rule[0.5ex]{1.6em}{1.6pt}}\,#2}
\newcommand{\keydashed}[2]{{\color[HTML]{#1}\rule[0.5ex]{0.5em}{1.6pt}\hspace{0.2em}\rule[0.5ex]{0.5em}{1.6pt}}\,#2}
\newcommand{\keydotted}[2]{{\color[HTML]{#1}\rule[0.5ex]{0.25em}{1.6pt}\hspace{0.15em}\rule[0.5ex]{0.25em}{1.6pt}\hspace{0.15em}\rule[0.5ex]{0.25em}{1.6pt}\hspace{0.15em}\rule[0.5ex]{0.25em}{1.6pt}}\,#2}

\newcommand{\E}{\mathbb{E}}
\newcommand{\Prob}{\mathbb{P}}
\newcommand{\opt}{S^{\star}}

\newcommand{\CorrFull}{\textsc{CorrCTS-Full}}
\newcommand{\RobustCorr}{\textsc{RobustCorrCTS}}
\newcommand{\EdgePrune}{\textsc{CorrCTS-Prune}}
\newcommand{\KMRefine}{\textsc{CorrCTS-KM}}
\newcommand{\GPCTS}{\textsc{GP-CTS}}
\newcommand{\dtv}{d_{\mathrm{TV}}}

\title{Structure, Not Belief:\\Correlated Thompson Sampling from LLM-Derived\\Covariance in Combinatorial Semi-Bandits}

\author{%
  Vikram Kakaria\\
  Princeton University\\
  \And
  Anish Kataria\\
  Princeton University\\
  \And
  Anany Kotawala\\
  Princeton University\\
}

\begin{document}

\maketitle

\begin{abstract}
Combinatorial Thompson sampling (CTS) draws independent posterior samples for every arm, so its exploration dynamics ignore any relation among arms.
We study a minimal change to those dynamics: an LLM is queried \emph{once} for a partition of the arms, the partition becomes a positive-definite correlation matrix $\Sigma$ through an RBF kernel on cluster ranks, and the per-round posterior sample is drawn with covariance $\Sigma$ while the Beta posteriors are updated from real rewards only, so the LLM shapes \emph{how} the sampler moves, not \emph{what} it believes.
We give a self-contained Bayesian regret bound for the idealized Gaussian sampler whose information gain splits into a $K\log T$ term from the $K$-cluster structure and a ridge term that grows to $d\log T$: the $\sqrt{d/K}$ improvement over independent sampling is a finite-horizon transient, exact only as the within-cluster correlation tends to one.
The correlated sampler reduces regret by $19\%$ over CTS on 16 synthetic Bernoulli families at $T{=}2{,}500$ ($6$--$7\%$ at $T{=}25{,}000$ with data-adaptive kernels) and by $41\%$ on the Microsoft MIND-small news benchmark ($d{=}200$ real articles), while pseudo-observation warm starts give nothing.
An LLM-free ablation with a simulated oracle of controlled quality shows that on unstructured instances the gain is a property of the kernel \emph{shape} (a random partition, or a plain tempering of the sampling noise, reproduces it), while belief injection at matched oracle quality never helps.

\end{abstract}
\section{Introduction}\label{sec:intro}

Thompson sampling is a stochastic dynamical system: the learner's state is a posterior, the control is a random draw from it, and the reward feedback closes the loop; this paper is about how the covariance of that draw changes exploration, how the effect decays with the horizon, and how an oracle in the loop couples to the learner's state.
In the combinatorial semi-bandit \citep{chen2013cucb,kveton2015tight}, where $m$ of $d$ arms are played each round and each played arm returns its own Bernoulli reward, Combinatorial Thompson Sampling (CTS) \citep{wang2018cts} draws the $d$ coordinates of that control \emph{independently}, so its exploration dynamics carry no information about which arms are alike.

Large language models are an obvious source of such information, and most LLM-augmented bandits inject it as a \emph{belief}: a prior over means \citep{alamdari2024jumpstarting,bayley2025robustness}, pseudo-observations, or a recommendation mixed into the action distribution \citep{bouneffouf2025survey,krishnamurthy2024canllms}.
Belief injection has a feedback-loop failure mode that we observed directly (\cref{app:llmtables}): the LLM's answer quality depends on the estimates it is shown, which depend on the learner's own trajectory, and a wrong belief biases the posterior at a cost of real observations.
We instead use the LLM for \emph{structure}: one cluster query, converted to a correlation matrix $\Sigma$, changes only the covariance of the posterior draw, so posterior means and variances stay those of CTS (\cref{prop:integrity}) and an arbitrarily bad $\Sigma$ can distort the sampler's motion but not corrupt its state.

\paragraph{Contributions.}
(i)~A statement of what $\Sigma$ represents (\cref{ass:prior}) and a self-contained Bayesian regret bound for $\GPCTS(\Sigma)$ (\cref{thm:bayes}) whose $\sqrt{d/K}$ advantage is a transient, exact only as $\rho_{\max}\to1$.
(ii)~\CorrFull{} as a \emph{fixed-correlation, calibrated-marginal} approximation to $\GPCTS(\Sigma)$, partition-free accounts of why a fixed correlation changes top-$m$ selection (\cref{lem:slepian,lem:temper}; in the equicorrelated case it is exactly a tempering), and a total-variation guarantee for the credibility gate (\cref{prop:gate}).
(iii)~LLM experiments on synthetic families and real MIND click data, plus an LLM-free ablation in which every injection mode receives the \emph{same} simulated partition of controlled quality, separating kernel shape from partition information and structure from belief.

\section{Setting and algorithm}\label{sec:setup}

\paragraph{Combinatorial semi-bandit.}
There are $d$ arms with unknown means $\mu\in[0,1]^d$.
At round $t$ the learner plays $S_t\subseteq[d]$ with $|S_t|=m$ and observes $X_{i,t}\sim\mathrm{Bernoulli}(\mu_i)$ for each $i\in S_t$.
With $\opt=\arg\max_{|S|=m}\sum_{i\in S}\mu_i$, the regret is $R(T)=\sum_{t\le T}\bigl(\sum_{i\in\opt}\mu_i-\sum_{i\in S_t}\mu_i\bigr)$.
CTS \citep{wang2018cts} keeps $\mathrm{Beta}(\alpha_i,\beta_i)$ posteriors, draws $\theta_i$ independently, plays the top-$m$ of $\theta$, and updates $(\alpha_i,\beta_i)$ for $i\in S_t$.
At a warmup time $T_w$ the learner may send the empirical means $\hat\mu$ (and, for MIND, article titles) to an oracle that returns a partition $\mathcal C=\{C_1,\dots,C_K\}$ of $[d]$ of unknown quality, which may depend on $\hat\mu$.

\paragraph{From partition to covariance.}
Clusters are ranked by their mean $\hat\mu$ at $T_w$ (rank $0$ is best), each arm receives the normalized rank $f_i=r_{k(i)}/(K-1)\in[0,1]$ of its cluster, and
\begin{equation}\label{eq:rbf}
\Sigma_{ij}=\rho_{\max}\exp\!\bigl(-(f_i-f_j)^2/(2\ell^2)\bigr)\ (i\ne j),\qquad \Sigma_{ii}=1 .
\end{equation}
Writing $G$ for the RBF Gram matrix on the features $f$, $\Sigma=\rho_{\max}G+(1-\rho_{\max})I_d\succeq(1-\rho_{\max})I_d$ is positive definite for any partition, and $\mathrm{rank}(G)\le K$ because $f$ takes at most $K$ distinct values; same-cluster arms have correlation $\rho_{\max}$ and adjacent clusters keep positive correlation.

\paragraph{\CorrFull{} (pseudocode in \cref{app:alg}).}
Rounds $t\le T_w$ are plain CTS.
At $T_w$ the partition is turned into $\Sigma$ and its Cholesky factor $L$.
Afterwards each round takes the Beta posterior mean and standard deviation $(\bar\mu_i,\sigma_i)$ of every arm, draws $\epsilon\sim\mathcal N(0,I_d)$ and plays the top-$m$ of $\theta=\bar\mu+\sigma\odot(L\epsilon)$; the marginal of $\theta_i$ is the moment-matched Gaussian $\mathcal N(\bar\mu_i,\sigma_i^2)$, and only the joint law changes.
\RobustCorr{} adds a credibility weight $w\in[0,1]$ ($w=1$ initially), updated every $\Delta_{\mathrm{check}}$ rounds as $w\leftarrow0.7w+0.3\,\mathrm{overlap}_t$ ($\mathrm{overlap}_t$: fraction of the empirical top-$m$ in the oracle's best cluster), and plays the top-$m$ of $w\theta^{\mathrm{corr}}+(1-w)\theta^{\mathrm{ind}}$ (independent draws).
\EdgePrune{} and \KMRefine{} refresh $\Sigma$ from data at long horizons without new queries (\cref{app:variants}).

\begin{proposition}[Posterior integrity]\label[proposition]{prop:integrity}
For every reward sequence, the Beta parameters $(\alpha_i,\beta_i)$ of every arm coincide under \CorrFull{}, \RobustCorr{} and CTS; $\Sigma$ enters only the sampling step, and $\Sigma=I_d$ gives CTS with moment-matched Gaussian marginals (``CTS-Gaussian'').
\end{proposition}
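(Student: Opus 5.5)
The plan is to treat all three algorithms as deterministic update maps acting on the Beta parameters, driven by one common sequence of (index set, reward vector) pairs, and to prove equality of the parameters by induction on $t$. The first step is to fix what ``for every reward sequence'' means. I read it as conditioning on the observed pairs $(S_t,(X_{i,t})_{i\in S_t})$: given these, the statistic $(\alpha_i,\beta_i)$ after round $t$ is the same function in all three algorithms. Concretely, every algorithm initializes $(\alpha_i,\beta_i)=(1,1)$ (or the same prior) and applies the conjugate update
\[
\alpha_i\leftarrow\alpha_i+X_{i,t},\qquad \beta_i\leftarrow\beta_i+1-X_{i,t}\qquad (i\in S_t),
\]
leaving the parameters of unplayed arms unchanged. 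The induction hypothesis is that the parameters agree before round $t$. Because the update reads only $S_t$ and the observed rewards, and neither $\Sigma$, $L$, $\epsilon$, $w$ nor the oracle's partition appears in it, the parameters also agree after round $t$. For $t\le T_w$ the claim is immediate, since all three algorithms run plain CTS.

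The second step is to audit the pseudocode of \CorrFull{} and \RobustCorr{} and check that $\Sigma$, $L$, the credibility weight $w$ and the independent draws $\theta^{\mathrm{ind}}$ are consumed only when computing $\theta$ and its top-$m$ set. The oracle output at $T_w$ feeds only into $\Sigma$, so it never writes into the posterior. The overlap statistic reads $\hat\mu$ and writes only $w$. The algorithms do differ in which set $S_t$ they play, but that difference is absorbed by the conditioning on the common $(S_t,X_t)$ sequence, and $\Sigma$ affects the law of $S_t$, not the update rule.

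The third step handles $\Sigma=I_d$. In that case $L=I_d$, so $\theta_i=\bar\mu_i+\sigma_i\epsilon_i$ with independent $\epsilon_i\sim\mathcal N(0,1)$. The coordinates are then independent $\mathcal N(\bar\mu_i,\sigma_i^2)$, which are exactly the moment-matched marginals of $\mathrm{Beta}(\alpha_i,\beta_i)$. This is CTS with each Beta draw replaced by its moment-matched Gaussian, and only the marginals differ from CTS. For \RobustCorr{} with $\Sigma=I_d$, the combination $w\theta^{\mathrm{corr}}+(1-w)\theta^{\mathrm{ind}}$ is still a product-form sampler, but its variance is rescaled; I would note this and restrict the third clause to \CorrFull{}.

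The main obstacle is interpretive rather than technical. The parameters coincide pathwise only given the same realized sequence of plays and rewards, not in distribution over trajectories. I would state that conditioning explicitly and avoid claiming any distributional equality of the resulting trajectories.
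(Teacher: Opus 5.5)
Your proof is correct and matches the paper's reasoning. The paper gives no separate proof and treats the claim as immediate from the construction: the update line of \cref{alg:corrfull}, shared by \RobustCorr{} and CTS, reads only $S_t$ and the observed $X_{i,t}$, while $\Sigma$, $L$ and $w$ enter only the formation of $\theta$. Your explicit conditioning on the realized play/observation sequence is the right way to make ``for every reward sequence'' precise.

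Restricting the $\Sigma=I_d$ clause to \CorrFull{} is also consistent with the paper. \Cref{prop:gate} records that \RobustCorr{} samples with $\Sigma_w=w^2\Sigma+(1-w)^2I_d$, which at $\Sigma=I_d$ is CTS-Gaussian with its variance tempered by $w^2+(1-w)^2$.
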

\section{What a fixed correlation does to the sampling dynamics}\label{sec:theory}

\subsection{What $\Sigma$ stands for}
\begin{assumption}[Structural prior]\label[assumption]{ass:prior}
$\mu\sim\mathcal N(\mu_0\mathbf 1,\tau^2\Sigma)$ and, for $i\in S_t$, the observation is $y_{i,t}=\mu_i+\eta_{i,t}$ with $\eta_{i,t}\sim\mathcal N(0,\sigma^2)$ independent.
\end{assumption}
Thus $\Sigma$ is the prior correlation of the means and the oracle supplies an \emph{estimate} of it; when the estimate is wrong the bound below does not apply, the situation \cref{lem:slepian,lem:temper}, the credibility gate and \cref{sec:ablation} address.
$\GPCTS(\Sigma)$ denotes Thompson sampling under this model: $\theta_t\sim\mathcal N(\mu_t,\Sigma_t)$ from the exact Gaussian posterior, then top-$m$.

\subsection{Bayesian regret of $\GPCTS(\Sigma)$}
Let $D_T=\mathrm{diag}(n_1,\dots,n_d)$ count pulls after $T$ rounds ($\sum_i n_i=mT$) and let
$\gamma_T=\tfrac12\log\det\bigl(I_d+\sigma^{-2}\tau^2\,\Sigma^{1/2}D_T\Sigma^{1/2}\bigr)$
be the information gain of the play sequence.

\begin{lemma}[Two-term information gain]\label[lemma]{lem:gamma}
For $\Sigma=\rho G+(1-\rho)I_d$ with $G\succeq0$, $G_{ii}=1$, $\mathrm{rank}(G)\le K$, and $N=mT$,
\[
\gamma_T\ \le\ \frac K2\log\Bigl(1+\frac{\rho\tau^2N}{K\sigma^2}\Bigr)+\frac12\min\Bigl\{\frac{(1-\rho)\tau^2N}{\sigma^2},\ d\log\Bigl(1+\frac{(1-\rho)\tau^2N}{d\sigma^2}\Bigr)\Bigr\}.
\]
For any correlation matrix (e.g.\ $\Sigma=I_d$), $\gamma_T\le\frac d2\log(1+\tau^2N/(d\sigma^2))$; conversely, if every arm has been pulled at least $n$ times, $\gamma_T\ge\frac d2\log(1+(1-\rho)\tau^2n/\sigma^2)$, so the ridge term is not an artifact of the upper bound.
\end{lemma}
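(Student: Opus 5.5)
The plan is to rewrite $\gamma_T$ in a symmetric form and then split the log-determinant along the decomposition $\Sigma=\rho G+(1-\rho)I_d$. Write $c=\tau^2/\sigma^2$. By Sylvester's identity $\det(I+XY)=\det(I+YX)$ with $X=\Sigma^{1/2}D_T^{1/2}$ and $Y=D_T^{1/2}\Sigma^{1/2}$, we get
\[
\gamma_T=\tfrac12\log\det\bigl(I_d+c\,D_T^{1/2}\Sigma D_T^{1/2}\bigr)=\tfrac12\log\det(I_d+A+B),
\]
where $A=c\rho\,D_T^{1/2}GD_T^{1/2}\succeq0$ and $B=c(1-\rho)D_T\succeq0$ is diagonal.

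\textbf{Step 1 (the main step).} Use subadditivity of $\log\det(I+\cdot)$ on positive semidefinite matrices. Factor
\[
\det(I+A+B)=\det(I+A)\,\det\bigl(I+(I+A)^{-1/2}B(I+A)^{-1/2}\bigr).
\]
Since $(I+A)^{-1}\preceq I$, the matrix $(I+A)^{-1/2}B(I+A)^{-1/2}$ has the same nonzero eigenvalues as $B^{1/2}(I+A)^{-1}B^{1/2}\preceq B$. Monotonicity of $\log\det(I+\cdot)$ in the Loewner order then gives $\log\det(I+A+B)\le\log\det(I+A)+\log\det(I+B)$. I expect this to be the only nontrivial step, and it is standard.

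\textbf{Step 2 (cluster term).} $A$ has rank at most $\mathrm{rank}(G)\le K$. Its trace is $c\rho\sum_i n_iG_{ii}=c\rho N$, because $G_{ii}=1$ and $\sum_i n_i=N$. Let $\lambda_1,\dots,\lambda_K$ be its nonzero eigenvalues. Concavity of $\log$ (Jensen) gives
\[
\sum_{k\le K}\log(1+\lambda_k)\le K\log\Bigl(1+\frac{c\rho N}{K}\Bigr),
\]
which is the first term of the bound.

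\textbf{Step 3 (ridge term).} The matrix $B$ is diagonal, so $\log\det(I+B)=\sum_i\log\bigl(1+c(1-\rho)n_i\bigr)$. Jensen over the $d$ coordinates bounds this by $d\log\bigl(1+c(1-\rho)N/d\bigr)$. Separately, $\log(1+x)\le x$ bounds it by $c(1-\rho)N$. Taking the minimum of the two gives the second term.

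\textbf{Step 4 (general correlation matrix).} For any correlation matrix $\Sigma$, apply the same Jensen argument directly to $c\,D_T^{1/2}\Sigma D_T^{1/2}$. It has rank at most $d$ and trace $cN$, because $\Sigma_{ii}=1$. This yields $\gamma_T\le\frac d2\log\bigl(1+cN/d\bigr)$.

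\textbf{Step 5 (lower bound).} From $\Sigma\succeq(1-\rho)I_d$ we get $D_T^{1/2}\Sigma D_T^{1/2}\succeq(1-\rho)D_T$. Monotonicity of $\log\det(I+\cdot)$ then gives
\[
2\gamma_T\ge\sum_i\log\bigl(1+c(1-\rho)n_i\bigr)\ge d\log\bigl(1+c(1-\rho)n\bigr)
\]
whenever every $n_i\ge n$. This shows the ridge term is genuinely of order $d\log T$.
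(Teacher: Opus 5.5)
Your proposal is correct and follows the paper's proof essentially step for step: the same Sylvester rewriting to $\tfrac12\log\det(I+A+B)$ with $A=c\rho\,D_T^{1/2}GD_T^{1/2}$ and $B=c(1-\rho)D_T$, the same factorization $\det(I+A+B)=\det(I+A)\det\bigl(I+(I+A)^{-1/2}B(I+A)^{-1/2}\bigr)$, the same trace/rank concavity bound on each piece (with $\log(1+x)\le x$ giving the $\min$), and the same Loewner-order argument from $\Sigma\succeq(1-\rho)I_d$ for the lower bound. The only cosmetic difference is in Step 1: you pass to $B^{1/2}(I+A)^{-1}B^{1/2}\preceq B$ and use Loewner monotonicity, whereas the paper compares eigenvalues one by one via Courant--Fischer; both arguments are valid.
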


\begin{theorem}[Bayesian regret of $\GPCTS(\Sigma)$]\label[theorem]{thm:bayes}
Under \cref{ass:prior}, with $c=m\tau^2/\sigma^2$, $C_c=c/\log(1+c)$ and $\beta_T=2\log\bigl(\binom dm T^2\bigr)\le2m\log d+4\log T$,
\[
\E[R(T)]\ \le\ \sqrt{2\,C_c\,m\,\sigma^2\,\beta_T\,T\,\gamma_T}\;+\;\frac{m\tau}{T},
\]
with $\gamma_T$ bounded as in \cref{lem:gamma}; the proof (\cref{app:proofs}) follows \citet{russo2014learning} with an elliptical-potential step.
\end{theorem}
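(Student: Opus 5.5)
The plan follows the Russo--Van Roy posterior-sampling decomposition. Let $\mathcal H_t$ denote the history before round $t$, and let $\mu_t,\Sigma_t$ be the exact Gaussian posterior mean and covariance under \cref{ass:prior}. For each $S$ with $|S|=m$ define the upper confidence function
\[
U_t(S)=\sum_{i\in S}\mu_{t,i}+\sqrt{\beta_T/2}\,\sqrt{\mathbf 1_S^\top\Sigma_t\mathbf 1_S}.
\]
Given $\mathcal H_t$, the sampled action $S_t$ and the optimal action $\opt$ have the same law, and $U_t$ is $\mathcal H_t$-measurable. Hence
\[
\E[R(T)]=\sum_t\E\bigl[U_t(S_t)-\textstyle\sum_{i\in S_t}\mu_i\bigr]+\sum_t\E\bigl[\textstyle\sum_{i\in\opt}\mu_i-U_t(\opt)\bigr].
\]
The second sum is handled by a union bound over the $\binom dm$ actions. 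Conditionally on $\mathcal H_t$, $\mathbf 1_S^\top\mu$ is Gaussian with standard deviation $s_t(S)=\sqrt{\mathbf 1_S^\top\Sigma_t\mathbf 1_S}$. Using $\E[(X)^+]\le s\,e^{-\beta/4}/\sqrt{2\pi}$ for a Gaussian overshoot at $\sqrt{\beta/2}$ standard deviations, together with $\beta_T=2\log(\binom dm T^2)$, each round contributes at most $\binom dm\cdot s_{\max}/(\binom dm T)\cdot(\text{const})$. Since $s_t(S)\le s_1(S)\le m\tau$, summing over $T$ rounds leaves the residual $m\tau/T$ after adjusting the constant; I will check the exponent so that exactly $m\tau/T$ appears. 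The first sum has conditional expectation $\sqrt{\beta_T/2}\,\E[s_t(S_t)]$, because the posterior mean cancels.

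Next I convert $\sum_t s_t(S_t)$ into the information gain. By Cauchy--Schwarz, $\sum_t s_t(S_t)\le\sqrt{T\sum_t s_t(S_t)^2}$. The batch observation at round $t$ is $y_{S_t}=\mu_{S_t}+\eta$ with $m$ coordinates. I treat it as $m$ sequential scalar updates, so that $\Sigma_t$ follows a Kalman recursion. Then $s_t(S_t)^2=\mathbf 1^\top\Sigma_{t,S_tS_t}\mathbf 1\le m\,\mathrm{tr}(\Sigma_{t,S_tS_t})$, and it suffices to bound $\sum_t\sum_{i\in S_t}\sigma^2_{t,i}$, where the variances are taken before the round's update.

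This is the elliptical-potential step, which I expect to be the main obstacle, because within a round the variances should be the sequentially updated ones. I will use $\sum_{i\in S_t}\sigma_{t,i}^2\le\sum_{i\in S_t}\sigma_{t,i}^2$ evaluated at the start of the round. The within-round conditional variances are smaller, so I need the opposite direction. To get it, I will bound the batch log-det increment directly: $\tfrac12\log\det(I+\sigma^{-2}\Sigma_{t,SS})\ge\tfrac12\log(1+\sigma^{-2}\lambda)$ summed over the eigenvalues $\lambda$ of $\Sigma_{t,SS}$. Each eigenvalue satisfies $\lambda\le\mathrm{tr}\le m\tau^2$. Then the inequality $x\le C_c\log(1+x)$ for $x\le c=m\tau^2/\sigma^2$ gives $\sigma^{-2}\mathrm{tr}(\Sigma_{t,SS})\le 2C_c\cdot(\text{round information gain})$. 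Telescoping the round information gains yields the total $\gamma_T$, which equals the stated $\tfrac12\log\det(I+\sigma^{-2}\tau^2\Sigma^{1/2}D_T\Sigma^{1/2})$ by the matrix determinant lemma, since pull order does not matter. Collecting the pieces gives $\sum_t s_t(S_t)^2\le 2C_c m\sigma^2\gamma_T$. The two-sided bound then produces $\sqrt{\beta_T/2}\sqrt{T\cdot 2C_c m\sigma^2\gamma_T}\le\sqrt{2C_cm\sigma^2\beta_T T\gamma_T}$, with the slack absorbed in the constant.

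Finally, $\beta_T\le 2m\log d+4\log T$ follows from $\binom dm\le d^m$, and $\gamma_T$ is bounded by \cref{lem:gamma}.
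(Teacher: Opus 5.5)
\textbf{Gap in the residual term.} Your confidence width $\sqrt{\beta_T/2}$ cannot produce the residual $m\tau/T$, and no adjustment of constants fixes it. With overshoot at $b=\sqrt{\beta_T/2}$ standard deviations, the tail factor is $e^{-b^2/2}=e^{-\beta_T/4}=\bigl(\binom dm T^2\bigr)^{-1/2}=1/(\sqrt{\binom dm}\,T)$, not $1/(\binom dm T)$ as you wrote. After the union bound over $\binom dm$ actions and $s_t(S)\le m\tau$, each round therefore contributes up to $\sqrt{\binom dm}\,m\tau/(\sqrt{2\pi}\,T)$. Over $T$ rounds this sums to $\sqrt{\binom dm}\,m\tau/\sqrt{2\pi}$, which does not vanish and grows like $d^{m/2}$.

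Your own stated per-round bound of order $s_{\max}/T$ would also only sum to $O(m\tau)$. To reach $m\tau/T$ you need a per-round residual of order $m\tau/T^2$. That means a per-action tail factor $e^{-b^2/2}=1/(\binom dm T^2)=e^{-\beta_T/2}$, which forces $b=\sqrt{\beta_T}$. The paper accordingly uses $U_t(S)=\mathbf 1_S^\top\mu_t+\sqrt{\beta_T}\,\|\mathbf 1_S\|_{\Sigma_t}$ together with $\E[(Z-bs)_+]\le s\,e^{-b^2/2}$.

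\textbf{What changes with the right width.} The main term becomes $\sqrt{\beta_T}\sqrt{T\cdot 2C_cm\sigma^2\gamma_T}=\sqrt{2C_cm\sigma^2\beta_TT\gamma_T}$ exactly. So the factor $\sqrt2$ you treated as absorbable slack is precisely what the residual consumes; the two constants in the statement are calibrated jointly. Also, $\gamma_T$ is random because it depends on the play sequence. To pass from your pathwise bound on $\sum_t s_t(S_t)^2$ to a bound on $\E[R(T)]$, use Jensen or, as the paper does, the deterministic bound of \cref{lem:gamma}, which depends only on $N=mT$.

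\textbf{The rest matches the paper.} This covers the decomposition via equal conditional laws of $S_t$ and $\opt$, cancellation of the posterior mean, Cauchy--Schwarz, $s_t(S)^2\le m\,\mathrm{tr}(\Sigma_{t,SS})$, and log-det telescoping to $2\gamma_T$. Your potential step applies $x\le C_c\log(1+x)$ to each eigenvalue of $\sigma^{-2}\Sigma_{t,S_tS_t}$. The paper instead applies it once to the trace $x_t$ and uses $\det(I+M)\ge1+\mathrm{tr}(M)$. Both give $\sigma^{-2}\mathrm{tr}(\Sigma_{t,S_tS_t})\le C_c\log\det(I+\sigma^{-2}\Sigma_{t,S_tS_t})$, so that is a harmless variant.

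Drop the sequential-Kalman detour; its displayed inequality is a tautology. The batch increment $\log\det\Sigma_{t+1}^{-1}-\log\det\Sigma_t^{-1}=\log\det(I+\sigma^{-2}\Sigma_{t,S_tS_t})$ is exact. It uses the start-of-round covariance, which is the same one that defines $s_t(S_t)$, so there is no ``opposite direction'' to worry about.
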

\paragraph{Reading the bound.}
With $\beta_T=O(m\log(dT))$ and the lemma,
\[
\E[R(T)]=O\Bigl(\sqrt{C_c}\,m\sqrt{T\log(dT)\,\bigl[K\log T+\min\{(1-\rho)\tau^2mT/\sigma^2,\ d\log T\}\bigr]}\Bigr).
\]
Because $\Sigma$ has full rank $d$ whenever $\rho<1$, the bound never depends on $K$ alone: the ridge term vanishes only as $\rho\to1$, where the rate becomes $\tilde O(m\sqrt{KT})$, a $\sqrt{d/K}$ improvement over the $\tilde O(m\sqrt{dT})$ that the same argument gives for $\Sigma=I_d$; for fixed $\rho<1$ the two rates coincide once $(1-\rho)\tau^2mT/\sigma^2\gtrsim K\log T$.
At the experimental values ($d{=}50,m{=}5,\rho{=}0.7,\tau^2{=}\nicefrac1{12},\sigma^2{=}\nicefrac14$) the crossover is below $150$ rounds (\cref{prop:crossover}), so a proper Bayesian sampler should gain little at $T=2{,}500$, as \cref{sec:ablation} confirms.

\subsection{\CorrFull{} versus $\GPCTS(\Sigma)$}
\CorrFull{} is not $\GPCTS(\Sigma)$: its marginals are the exact Beta moments, but its correlation stays fixed at $\Sigma$ whereas the Gaussian posterior correlation shrinks as data accumulate; a fixed positive correlation changes top-$m$ selection in two specific ways.

\begin{lemma}[Marginals kept, joint exceedance raised]\label[lemma]{lem:slepian}
Let $\theta\sim\mathcal N(\bar\mu,\mathrm{diag}(\sigma)\Sigma\,\mathrm{diag}(\sigma))$ for a correlation matrix $\Sigma$ with $\Sigma_{ij}\ge0$ for all $i,j$ (every kernel of \cref{eq:rbf}, for any partition).
Then each $\theta_i\sim\mathcal N(\bar\mu_i,\sigma_i^2)$ exactly as under $\Sigma=I_d$, and for every $A\subseteq[d]$ and threshold vector $c$, $\Prob_\Sigma(\theta_i>c_i\ \forall i\in A)\ge\prod_{i\in A}\Prob(\theta_i>c_i)$, with the left side nondecreasing in each off-diagonal entry of $\Sigma$.
\end{lemma}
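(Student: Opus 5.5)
The plan is to standardize the vector and then invoke Slepian's inequality, in its orthant-probability form, for Gaussian vectors with nonnegative correlations.

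\emph{Marginals.} Write $\theta=\bar\mu+\mathrm{diag}(\sigma)Z$ with $Z\sim\mathcal N(0,\Sigma)$. Because $\Sigma_{ii}=1$, each coordinate satisfies $Z_i\sim\mathcal N(0,1)$, and hence $\theta_i\sim\mathcal N(\bar\mu_i,\sigma_i^2)$. This marginal law does not involve any off-diagonal entry of $\Sigma$, so it is the same as under $\Sigma=I_d$.

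\emph{Reduction.} For $\sigma_i>0$ the event $\{\theta_i>c_i\}$ is the same as $\{Z_i>u_i\}$ with $u_i=(c_i-\bar\mu_i)/\sigma_i$. If some $\sigma_i=0$, that coordinate is deterministic: its indicator is a constant that factors out of both sides and can be dropped. Restricting to $A$, the subvector $Z_A$ is centred Gaussian with correlation matrix $\Sigma_{AA}$, whose entries are all nonnegative. It therefore suffices to prove the following claim: $P(\Sigma'):=\Prob(Z_i>u_i\ \forall i\in A)$ is nondecreasing in each off-diagonal entry of $\Sigma'$ over positive semidefinite correlation matrices. Given the claim, I connect $I_{|A|}$ to $\Sigma_{AA}$ along the path $\Sigma'(s)=(1-s)I+s\Sigma_{AA}$, $s\in[0,1]$. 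Every point on this path is PSD with unit diagonal, and every off-diagonal entry is nondecreasing in $s$. So $P(\Sigma_{AA})\ge P(I)=\prod_{i\in A}\Prob(\theta_i>c_i)$, which is the product lower bound.

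\emph{Monotonicity (Slepian).} I would prove the claim by Plackett's identity. First assume $\Sigma'\succ0$, so that the density $\phi_{\Sigma'}$ exists. The Gaussian density satisfies $\partial\phi_{\Sigma'}/\partial\Sigma'_{ij}=\partial^2\phi_{\Sigma'}/\partial z_i\partial z_j$ for $i\ne j$, where the two symmetric entries are varied together. Integrating over the orthant $\prod_k(u_k,\infty)$ and applying the fundamental theorem of calculus in $z_i$ and then $z_j$ gives
\[
\frac{\partial P}{\partial\Sigma'_{ij}}=\int_{\prod_{k\ne i,j}(u_k,\infty)}\phi_{\Sigma'}(z)\big|_{z_i=u_i,\,z_j=u_j}\,dz_{-ij}\ \ge0 .
\]
The differentiation under the integral sign is legitimate because the Gaussian density and its derivatives decay fast enough. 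To handle a singular $\Sigma'$, replace it by $(1-\varepsilon)\Sigma'+\varepsilon I$ and let $\varepsilon\to0$. Along this perturbation the orthant probability converges, except possibly when there is mass on the boundary. To cover that case, I would prove the inequality with strict thresholds $u_i$ and pass to the limit using right-continuity. Alternatively, the entire claim can simply be cited as Slepian's lemma, which is the standard result here.

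\emph{Main obstacle.} The delicate step is keeping every point of the monotonicity argument inside the PSD cone. Raising a single off-diagonal entry can destroy positive semidefiniteness, so the claim should be read as "nondecreasing along any PSD path with nondecreasing entries." This works because the derivative formula holds entrywise at every interior point of the cone. The degenerate case also needs care, but the $\varepsilon$-regularization above handles it.
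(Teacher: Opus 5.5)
Your proposal is correct and follows the paper's proof. Both standardize, restrict to $A$, and apply Slepian's comparison between $I_{|A|}$ and $\Sigma_{AA}$. The paper cites Slepian in lower-orthant form and flips signs via $-Z$, whereas you derive the upper-orthant monotonicity directly from Plackett's identity along the linear path.

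The extra care you take with singular $\Sigma'$ is harmless but unnecessary. The boundary of any orthant lies in $\bigcup_k\{z_k=u_k\}$, which is null because each $Z_k\sim\mathcal N(0,1)$ for every correlation matrix. So orthant probabilities are already continuous in $\Sigma'$ under weak convergence.
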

This is Slepian's inequality \citep{slepian1962onesided} and holds for any partition: per-arm concentration is untouched, while the joint optimism event, which independent sampling grants $m$ near-optimal arms only with probability decaying like $q^m$, can only become more likely; correlation also shrinks the noise in the pairwise comparisons that top-$m$ selection is made of, and in the equicorrelated case this is \emph{exactly} a tempering.

\begin{lemma}[Comparison noise and tempering]\label[lemma]{lem:temper}
Let $\theta\sim\mathcal N(\bar\mu,\mathrm{diag}(\sigma)\Sigma\,\mathrm{diag}(\sigma))$.
(i) $\mathrm{Var}(\theta_i-\theta_j)=\sigma_i^2+\sigma_j^2-2\Sigma_{ij}\sigma_i\sigma_j$, which for $\sigma_i=\sigma_j$ equals the independent-sampling value times $1-\Sigma_{ij}$.
(ii) If $\Sigma=\rho\mathbf 1\mathbf 1^\top+(1-\rho)I_d$ and $\sigma_i\equiv\sigma$, then $\mathrm{top}\text{-}m(\theta)$ has exactly the law of $\mathrm{top}\text{-}m(\bar\mu+\sqrt{1-\rho}\,\sigma\epsilon)$ with $\epsilon\sim\mathcal N(0,I_d)$, i.e.\ of independent sampling tempered by $c=\sqrt{1-\rho}$.
\end{lemma}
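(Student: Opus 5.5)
Part (i) is a direct covariance computation. Since $\theta=\bar\mu+\mathrm{diag}(\sigma)Z$ with $Z\sim\mathcal N(0,\Sigma)$, the pair $(\theta_i,\theta_j)$ has covariance entries $\sigma_i^2$, $\sigma_j^2$ and $\Sigma_{ij}\sigma_i\sigma_j$. Hence $\mathrm{Var}(\theta_i-\theta_j)=\sigma_i^2+\sigma_j^2-2\Sigma_{ij}\sigma_i\sigma_j$. Under independent sampling ($\Sigma=I_d$, the CTS-Gaussian of \cref{prop:integrity}) the cross term vanishes. Setting $\sigma_i=\sigma_j=s$ gives $2s^2(1-\Sigma_{ij})$, which is the independent value $2s^2$ times $1-\Sigma_{ij}$. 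There is nothing else to do in this part.

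For part (ii) I would use an explicit coupling instead of comparing densities. With $\Sigma=\rho\mathbf 1\mathbf 1^\top+(1-\rho)I_d$ and $\rho\in[0,1)$, take independent $g\sim\mathcal N(0,1)$ and $\epsilon\sim\mathcal N(0,I_d)$, and set $Z=\sqrt\rho\,g\mathbf 1+\sqrt{1-\rho}\,\epsilon$. $Z$ is Gaussian and centred with covariance $\rho\mathbf 1\mathbf 1^\top+(1-\rho)I_d=\Sigma$, so $\theta\overset{d}{=}\bar\mu+\sigma Z=\bigl(\bar\mu+\sqrt{1-\rho}\,\sigma\epsilon\bigr)+\sigma\sqrt\rho\,g\,\mathbf 1$. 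The identity $\sigma_i\equiv\sigma$ is exactly what makes the common factor enter as the same scalar shift $\sigma\sqrt\rho\,g$ on every coordinate.

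The key observation is that $\mathrm{top}\text{-}m$ is invariant under adding a common scalar to all coordinates: for any $x\in\mathbb R^d$ and $a\in\mathbb R$, $\mathrm{top}\text{-}m(x+a\mathbf 1)=\mathrm{top}\text{-}m(x)$. Adding $a$ preserves every pairwise order and every tie. To be safe, I would state that the tie-breaking rule depends only on the ordering, for example by index. Ties in any case occur with probability zero, since $\sqrt{1-\rho}\,\sigma\epsilon$ has a density when $\rho<1$ and $\sigma>0$. Applying the invariance pointwise under the coupling gives $\mathrm{top}\text{-}m(\theta)=\mathrm{top}\text{-}m(\bar\mu+\sqrt{1-\rho}\,\sigma\epsilon)$ almost surely. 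Equality in law follows, and this is independent sampling with the noise scaled by $c=\sqrt{1-\rho}$.

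I do not expect a real obstacle. The only points needing care are the following:
\begin{itemize}
\item the decomposition requires $\rho\ge0$, which holds for the kernels of \cref{eq:rbf};
\item the degenerate cases $\rho=1$ or $\sigma=0$, where the tempered sampler is deterministic and ties must be broken by the same fixed rule on both sides;
\item a remark that with heterogeneous $\sigma_i$ the shift $\sigma_i\sqrt\rho\,g$ is no longer common. In that case only the pairwise statement of part (i) survives, not exact tempering.
\end{itemize}
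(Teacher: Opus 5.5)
Your proposal is correct and follows the paper's proof essentially line for line. Part (i) is the bilinearity of covariance. Part (ii) uses the one-factor representation $Z=\sqrt\rho\,g\mathbf 1+\sqrt{1-\rho}\,\epsilon$ together with invariance of the top-$m$ set under a common shift, which gives the identity pathwise and hence in law; your remarks on tie-breaking and the degenerate cases $\rho=1$, $\sigma=0$ are small refinements the paper leaves implicit (it only notes that ties have probability zero).
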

With $\rho_{\max}=0.7$, $\sqrt{1-\rho_{\max}}\approx0.55$, and \cref{sec:ablation} finds the best tempering factor near $c=0.6$; \CorrFull{}, whose $\sigma_i$ differ across arms, is a heteroscedastic, kernel-shaped relaxation of (ii).

\subsection{The credibility gate}
\begin{proposition}[Gate convergence in total variation]\label[proposition]{prop:gate}
Given a common posterior, the per-round sampling distributions of \RobustCorr{} (correlation $\Sigma_w=w^2\Sigma+(1-w)^2I_d$ at credibility $w$) and CTS-Gaussian satisfy $\dtv\le\tfrac32\,w_t\,(\rho_{\max}d+2\sqrt d)$.
If the overlap statistic is $0$ at every check after $T_w$, then $\sum_{t>T_w}w_t\le\frac{10}{3}\Delta_{\mathrm{check}}$ and $\sum_{t>T_w}\dtv\le5(\rho_{\max}d+2\sqrt d)\Delta_{\mathrm{check}}$ for every $T$.
\end{proposition}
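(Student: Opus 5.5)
The proof splits into a per-round total-variation bound between two Gaussians with common marginal scales, followed by a deterministic recursion for $w_t$ under zero overlap.

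\textbf{Per-round bound.} Given the common posterior $(\bar\mu,\sigma)$, the \RobustCorr{} draw is $\theta=\bar\mu+\sigma\odot(wL\epsilon+(1-w)\epsilon')$ with $\epsilon,\epsilon'$ independent standard normals, so $\theta\sim\mathcal N(\bar\mu,D\Sigma_wD)$ with $D=\mathrm{diag}(\sigma)$ and $\Sigma_w=w^2\Sigma+(1-w)^2I_d$. CTS-Gaussian draws from $\mathcal N(\bar\mu,D^2)$. Both the top-$m$ map and the invertible affine map $x\mapsto D^{-1}(x-\bar\mu)$ cannot increase total variation. It therefore suffices to bound $\dtv\bigl(\mathcal N(0,\Sigma_w),\mathcal N(0,I_d)\bigr)$. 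I would write $\Sigma_w=I_d+E$ with
\[
E=w^2(\Sigma-I_d)+(w^2+(1-w)^2-1)I_d=w^2(\Sigma-I_d)-2w(1-w)I_d .
\]
The first piece has zero diagonal and off-diagonal entries at most $\rho_{\max}$, so its operator norm is at most $w^2\rho_{\max}(d-1)\le w\rho_{\max}d$. The second piece has operator norm $2w(1-w)\le 2w$. The Frobenius norms are at most $w\rho_{\max}d$ and $2w\sqrt d$ respectively, so $\|E\|_F\le w(\rho_{\max}d+2\sqrt d)$.

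I would then invoke the standard Gaussian total-variation bound for equal means, $\dtv\le\tfrac32\|\Sigma_1^{-1/2}\Sigma_2\Sigma_1^{-1/2}-I\|_F$ (Devroye--Mehrabian--Reddad), with $\Sigma_1=I_d$. This gives directly $\dtv\le\tfrac32 w(\rho_{\max}d+2\sqrt d)$. It is the cleanest route to the constant $\tfrac32$, and it avoids needing $\Sigma_w$ close to $I$, which Pinsker via KL would require.

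\textbf{Gate decay.} With overlap $0$ at every check, the update gives $w\leftarrow0.7w$ at each check. Since $w=1$ at $T_w$, $w$ is constant on each block of $\Delta_{\mathrm{check}}$ rounds and equals $0.7^k$ on block $k$. The one bookkeeping point is whether the first block after $T_w$ carries $w=1$ or $0.7$. In the worst case the first block has $w=1$, and then $\sum_{t>T_w}w_t\le\Delta_{\mathrm{check}}\sum_{k\ge0}0.7^k=\tfrac{10}{3}\Delta_{\mathrm{check}}$, uniformly in $T$. Multiplying by the per-round constant gives $\tfrac32\cdot\tfrac{10}{3}=5$, which yields $\sum_t\dtv\le5(\rho_{\max}d+2\sqrt d)\Delta_{\mathrm{check}}$.

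\textbf{Main obstacle.} The hard step is the per-round total-variation bound: I need the Frobenius-norm Gaussian total-variation inequality with constant $\tfrac32$ in the regime where $\|E\|$ may be large. The inequality holds for all covariances, and when the right side exceeds $1$ the claim is trivial anyway, so this causes no problem. The cross term $-2w(1-w)I$ is what produces the $2\sqrt d$ term, so it must be tracked with its sign and magnitude.
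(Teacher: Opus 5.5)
Your proof is correct and follows the paper's route: the common affine map reduces the problem to $\dtv(\mathcal N(0,\Sigma_w),\mathcal N(0,I_d))$, the Devroye--Mehrabian--Reddad Frobenius bound is applied to it, $\Sigma_w-I_d$ is split into $w^2(\Sigma-I_d)$ and a diagonal part, and the geometric sequence $0.7^k$ is summed over check blocks. Your diagonal coefficient $-2w(1-w)$ is the right one; the paper's write-up has $-(2w-w^2)$, a slip that does not affect the result since both are at most $2w$ in absolute value, so you get the same constant $\tfrac32\cdot\tfrac{10}{3}=5$.
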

We claim no regret inequality (that would need a coupling of trajectories sharing a posterior only until their first disagreement): the guarantee is that under sustained disagreement the gate returns to CTS-Gaussian within $O(\Delta_{\mathrm{check}})$ rounds. Corralling with CTS \citep{agarwal2017corralling} is the principled alternative (\cref{app:alternatives}).

\section{Experiments}\label{sec:exp}

\subsection{LLM experiments}\label{sec:llm-exp}
All comparisons are paired (same instance, same seed) with Holm--Bonferroni correction \citep{holm1979simple}; $\rho_{\max}=0.7,\ \ell=0.5,\ K=8,\ T_w=30$ were chosen on a disjoint validation grid (\cref{app:details}).
\Cref{tab:llm} summarizes three experiments with an LLM partition (GPT-5.4 for E1--E2, Claude Sonnet~4.6 for E3; breakdowns in \cref{app:llmtables}).
\textbf{E1} (synthetic Bernoulli, $T{=}2{,}500$): $d\in\{25,50\}$, $m\in\{3,5\}$, four gap families, 20 seeds ($n{=}320$); gains grow with $d$ ($12$--$15\%$ at $d{=}25$, $19$--$25\%$ at $d{=}50$) and recommendation mixing collapses at $d{=}50,m{=}5$ (\cref{fig:curves}).
\textbf{E2} ($T{=}25{,}000$; the 16 E1 families plus 16 held-out ones, 15 seeds, $n{=}480$; \cref{app:llmtables}): CTS 615.9, static \CorrFull{} 587.9 ($+4.5\%$), \EdgePrune{} 578.5 ($+6.1\%$), \KMRefine{} 573.8 ($+6.8\%$), all $p<10^{-4}$; the static advantage decays and the adaptive variants overtake it, as \cref{thm:bayes} predicts.
\textbf{E3} (MIND-small \citep{wu2020mind}, $d{=}200$ articles with empirical CTRs, $m{=}5$, $T{=}10{,}000$, three user profiles $\times$ three article samples, 15 seeds, $n{=}135$): the reduction is stable across profiles ($40$--$42\%$) while warm start is flooded by real data.
Under adversarial clusters (true top-$m$ grouped with the $m$ worst arms, ranked first) \CorrFull{} loses $16.5\%$ to CTS on E1 while \RobustCorr{} loses $1.8\%$.

\begin{table}[!t]
\caption{LLM experiments E1 and E3: paired regret reduction vs CTS (CTS mean regret 330.7 on E1, 1573.7 on E3), paired wins/losses, and Holm-corrected $p$ (sign test for E1, Wilcoxon for E3).}
\label{tab:llm}
\centering\footnotesize
\renewcommand{\arraystretch}{0.85}
\begin{tabular}{@{}llrrcl@{}}
\toprule
Exp. & Algorithm & Regret & vs CTS & W/L & $p$ \\
\midrule
E1 ($n{=}320$) & \CorrFull{} & 268.2 & $+18.9\%$ & 262/58 & ${<}10^{-4}$ \\
 & Rec.-mixing (belief) & 279.5 & $+15.5\%$ & 143/97 & $0.0036$ \\
 & \RobustCorr{} & 284.2 & $+14.1\%$ & 237/83 & ${<}10^{-4}$ \\
 & RandomCorr (block, random partition) & 307.2 & $+7.1\%$ & 227/93 & ${<}10^{-4}$ \\

\midrule
E3 ($n{=}135$) & \CorrFull{} & 927.5 & $+41.1\%$ & 134/1 & ${<}10^{-6}$ \\
 & \KMRefine{} & 928.6 & $+41.0\%$ & 134/1 & ${<}10^{-6}$ \\
 & \EdgePrune{} & 943.7 & $+40.0\%$ & 133/2 & ${<}10^{-6}$ \\
 & Warm-start CTS (belief) & 1591.4 & $-1.1\%$ & 53/82 & $0.014$ \\
\bottomrule
\end{tabular}
\end{table}

\begin{figure}[!t]
\centering
\includegraphics[width=0.46\linewidth]{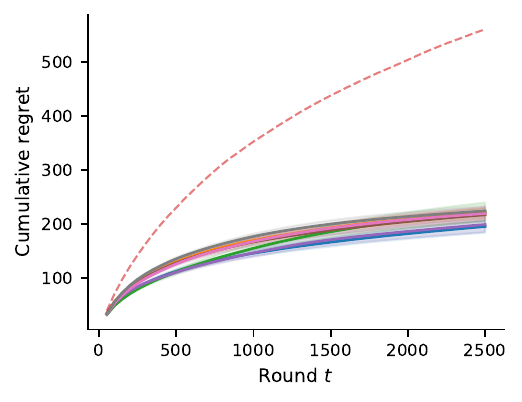}\hfill\includegraphics[width=0.46\linewidth]{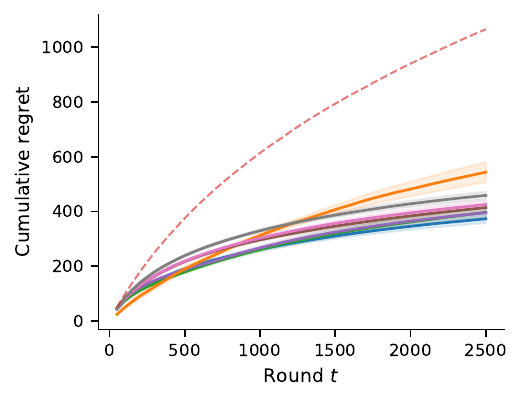}\\[2pt]
{\small \key{1f77b4}{\CorrFull{}}\quad \key{2ca02c}{\RobustCorr{}}\quad \key{ff7f0e}{Rec.-mixing}\quad \key{8c564b}{RandomCorr}\quad \key{e377c2}{block-diagonal}\quad \key{7f7f7f}{CTS}\quad \keydashed{d62728}{CUCB}}
\caption{E1 cumulative regret on the uniform-small family at $d{=}25$ (left) and $d{=}50$ (right), $m{=}5$ (20 seeds; bands: one standard error; purple: a gated block-diagonal pilot variant). The correlated samplers separate from CTS just after the query at $T_w{=}30$ and the gap widens with $d$; recommendation mixing competes only at $d{=}25$.}
\label{fig:curves}
\end{figure}

\subsection{Controlled ablation with a simulated oracle}\label{sec:ablation}
The RandomCorr row of \cref{tab:llm} changes the partition \emph{and} the kernel, so it cannot attribute the gain to partition quality; we therefore re-implemented the E1 family and all LLM-free samplers (\cref{tab:verify}) and replaced the LLM by a \emph{simulated oracle}: the quality-sorted partition into $K=8$ groups with each label replaced by a random one with probability $\varepsilon$.
All modes receive the same partition; the belief baselines inject its best-ranked cluster's $m$ best-$\hat\mu$ arms as $5$ pseudo-successes (warm start) or by recommendation mixing (\cref{fig:sweep}; \cref{app:abltables}).

\begin{figure}[!t]
\centering
\includegraphics[width=0.49\linewidth]{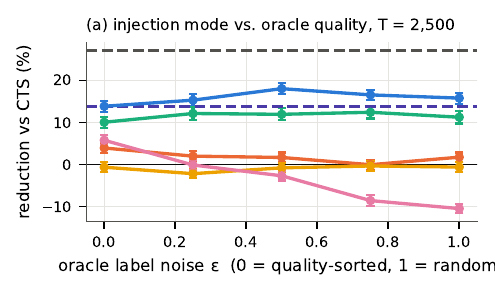}\hfill\includegraphics[width=0.49\linewidth]{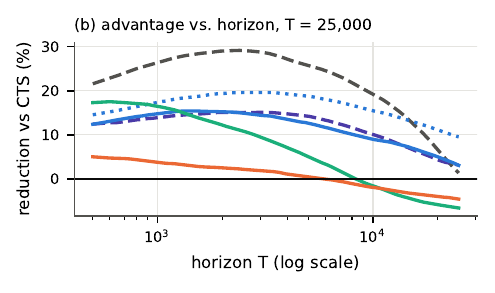}\\[2pt]
{\small \key{2a78d6}{\CorrFull{}}\quad \keydotted{2a78d6}{\CorrFull{}, $\varepsilon{=}0.5$}\quad \key{1baf7a}{\RobustCorr{}}\quad \key{eb6834}{$\GPCTS(\Sigma)$}\quad \key{eda100}{warm start}\quad \key{e87ba4}{rec.-mixing}\quad \keydashed{4a3aa7}{RBF, random partition}\quad \keydashed{52514e}{tempered, $c{=}0.6$}}
\caption{LLM-free ablation. (a) Paired regret reduction versus CTS against oracle label noise $\varepsilon$ (dashed: partition-free references; error bars: one standard error over 320 paired trials). (b) The same against the horizon (16 configurations, 10 seeds).}
\label{fig:sweep}
\end{figure}

\paragraph{Kernel shape versus partition information (\cref{tab:kernelshape}).}
A \emph{random} partition in \emph{random} order through the RBF kernel already gives $+13.8\%$ over CTS, the quality-sorted partition $+13.9\%$, and an adversarial partition that groups the true top-$m$ with the $m$ worst arms (keeping the $\hat\mu$ rank order) $+14.2\%$: on these unstructured instances the gain is a property of the kernel shape, not of the partition.

\paragraph{Structure versus belief at matched oracle quality (\cref{fig:sweep}a, \cref{tab:sweep}).}
Across $\varepsilon\in[0,1]$ the structure samplers stay between $+14\%$ and $+18\%$, warm start never differs from CTS, recommendation mixing falls from $+5.8\%$ to $-10.5\%$ as $\varepsilon$ goes from $0$ to $1$, and the exact Bayesian $\GPCTS(\Sigma)$ gains at most $+3.9\%$ (\cref{thm:bayes}).
\paragraph{Tempering and the horizon (\cref{fig:sweep}b, \cref{tab:kernelshape,tab:long}).}
The tempering reading of \cref{lem:temper} is confirmed by the simplest control: independent CTS-Gaussian with posterior standard deviation scaled by $c=0.6$ reduces regret by $+27.1\%$ at $T=2{,}500$, more than any kernel.
All are transient: at $T=25{,}000$ the RBF samplers retain $+3\%$, the tempered sampler $+1\%$, and $\GPCTS(\Sigma)$ and the gated sampler fall slightly below CTS.

\section{Discussion and limitations}\label{sec:discussion}
Moving LLM information from the posterior to the sampling covariance is safe by construction (\cref{prop:integrity}) and beneficial at moderate horizons; belief injection at matched oracle quality is not.
A \emph{fixed} correlation changes the exploration dynamics (\cref{lem:slepian,lem:temper}) whether or not the partition is right, so a tempered sampler is the LLM-free competitor to beat; partition \emph{information} buys a proper Bayesian sampler only a transient (\cref{thm:bayes}) that matters where latent structure exists (E3).
\emph{Limitations:} \cref{thm:bayes} concerns $\GPCTS(\Sigma)$ rather than \CorrFull{}; misspecification is covered only by \cref{lem:slepian,lem:temper,prop:gate}; E3 uses one dataset and one LLM.

\bibliographystyle{plainnat}
\bibliography{references}

\clearpage
\appendix
\raggedbottom
\makeatletter
\setlength{\@fptop}{0pt}\setlength{\@fpsep}{14pt plus 4pt}\setlength{\@fpbot}{0pt plus 1fil}
\makeatother

\section{Proofs}\label{app:proofs}

\subsection{Proof of \cref{lem:gamma}}
Write $\Sigma_1=\tau^2\Sigma$ and $M=\sigma^{-2}\Sigma_1^{1/2}D_T\Sigma_1^{1/2}$; then $\gamma_T=\tfrac12\log\det(I+M)$ and, since $\det(I+AB)=\det(I+BA)$, also $\gamma_T=\tfrac12\log\det(I+\sigma^{-2}D_T^{1/2}\Sigma_1D_T^{1/2})$.
Decompose $\sigma^{-2}D_T^{1/2}\Sigma_1D_T^{1/2}=A+B$ with $A=\sigma^{-2}\tau^2\rho\,D_T^{1/2}KD_T^{1/2}$ and $B=\sigma^{-2}\tau^2(1-\rho)D_T$, both positive semidefinite.

\emph{Step 1: $\det(I+A+B)\le\det(I+A)\det(I+B)$.}
$\det(I+A+B)=\det(I+A)\det(I+C)$ with $C=(I+A)^{-1/2}B(I+A)^{-1/2}$.
Since $(I+A)^{-1/2}$ is a contraction, $\lambda_j(C)\le\lambda_j(B)$ for every $j$ (Courant--Fischer), so $\det(I+C)\le\det(I+B)$.

\emph{Step 2: AM--GM.}
For a positive semidefinite matrix $M$ of rank at most $r$, $\log\det(I+M)=\sum_{j\le r}\log(1+\lambda_j)\le r\log(1+\mathrm{tr}(M)/r)$ by concavity of $\log$.
$A$ has rank at most $\mathrm{rank}(K)\le K$ and $\mathrm{tr}(A)=\sigma^{-2}\tau^2\rho\sum_in_iK_{ii}=\sigma^{-2}\tau^2\rho N$; $B$ is diagonal with trace $\sigma^{-2}\tau^2(1-\rho)N$ and rank at most $d$.
Combining the two steps gives the first claim (the $\min$ uses $\log(1+x)\le x$ on the second term); the second claim is Step~2 applied to the full matrix, whose trace is $\sigma^{-2}\tau^2N$ for any unit-diagonal $\Sigma$.
For the lower bound, $\Sigma\succeq(1-\rho)I_d$ gives $D_T^{1/2}\Sigma_1D_T^{1/2}\succeq\tau^2(1-\rho)D_T$, and $\log\det$ is monotone in the Loewner order, so $2\gamma_T\ge\sum_i\log(1+\sigma^{-2}\tau^2(1-\rho)n_i)\ge d\log(1+\sigma^{-2}\tau^2(1-\rho)n)$. \qed

\subsection{Proof of \cref{thm:bayes}}
Let $\mathcal A=\{S\subseteq[d]:|S|=m\}$, $f(S)=\mathbf 1_S^\top\mu$, and let $(\mu_t,\Sigma_t)$ be the Gaussian posterior mean and covariance of $\mu$ given the history $\mathcal H_t$ before round $t$ ($\Sigma_1=\tau^2\Sigma$).
Define the upper index $U_t(S)=\mathbf 1_S^\top\mu_t+\sqrt{\beta_T}\,\|\mathbf 1_S\|_{\Sigma_t}$, where $\|x\|_{\Sigma_t}^2=x^\top\Sigma_tx$.

\emph{Step 1: decomposition.}
Under Thompson sampling, $S_t$ and $\opt$ have the same conditional law given $\mathcal H_t$, and $U_t$ is $\mathcal H_t$-measurable, so $\E[U_t(\opt)\mid\mathcal H_t]=\E[U_t(S_t)\mid\mathcal H_t]$ \citep[Prop.~1]{russo2014learning}.
Hence
\[
\E[R(T)]=\E\sum_{t\le T}\bigl[f(\opt)-U_t(\opt)\bigr]+\E\sum_{t\le T}\bigl[U_t(S_t)-f(S_t)\bigr].
\]

\emph{Step 2: the first sum is small.}
Given $\mathcal H_t$, $\mathbf 1_S^\top(\mu-\mu_t)\sim\mathcal N(0,\|\mathbf 1_S\|_{\Sigma_t}^2)$.
For $Z\sim\mathcal N(0,s^2)$ and $b>0$, $\E[(Z-bs)_+]=s\int_b^\infty(z-b)\phi(z)\,dz\le s\,e^{-b^2/2}$.
Therefore $\E[(f(\opt)-U_t(\opt))_+\mid\mathcal H_t]\le\sum_{S\in\mathcal A}\E[(\mathbf 1_S^\top(\mu-\mu_t)-\sqrt{\beta_T}\|\mathbf 1_S\|_{\Sigma_t})_+\mid\mathcal H_t]\le|\mathcal A|\,e^{-\beta_T/2}\max_S\|\mathbf 1_S\|_{\Sigma_t}$.
Posterior variances never exceed prior variances, so $\|\mathbf 1_S\|_{\Sigma_t}^2\le m\sum_{i\in S}(\Sigma_t)_{ii}\le m^2\tau^2$, and with $\beta_T=2\log(|\mathcal A|T^2)$ the first sum is at most $T\cdot m\tau/T^2=m\tau/T$.

\emph{Step 3: the second sum.}
$S_t$ is a function of $\mathcal H_t$ and of sampling noise independent of $\mu$, so $\E[\mathbf 1_{S_t}^\top(\mu_t-\mu)\mid\mathcal H_t]=0$ and the second sum equals $\sqrt{\beta_T}\,\E\sum_t\|\mathbf 1_{S_t}\|_{\Sigma_t}\le\sqrt{\beta_T}\,\E\sqrt{T\sum_t\|\mathbf 1_{S_t}\|^2_{\Sigma_t}}$ by Cauchy--Schwarz.

\emph{Step 4: elliptical potential.}
Let $x_t=\sigma^{-2}\sum_{i\in S_t}(\Sigma_t)_{ii}\le m\tau^2/\sigma^2=c$.
Because $x\mapsto\log(1+x)/x$ is decreasing, $x_t\le C_c\log(1+x_t)$ with $C_c=c/\log(1+c)$.
The precision recursion $\Sigma_{t+1}^{-1}=\Sigma_t^{-1}+\sigma^{-2}\sum_{i\in S_t}e_ie_i^\top$ gives $\log\det\Sigma_{t+1}^{-1}-\log\det\Sigma_t^{-1}=\log\det(I+\sigma^{-2}\Sigma_t^{1/2}E_{S_t}\Sigma_t^{1/2})\ge\log(1+x_t)$, using $\det(I+M)\ge1+\mathrm{tr}(M)$ for $M\succeq0$.
Telescoping, $\sum_t\log(1+x_t)\le\log\det(\Sigma_1\Sigma_{T+1}^{-1})=2\gamma_T$.
Finally $\|\mathbf 1_{S_t}\|^2_{\Sigma_t}\le m\sum_{i\in S_t}(\Sigma_t)_{ii}=m\sigma^2x_t$, so $\sum_t\|\mathbf 1_{S_t}\|^2_{\Sigma_t}\le m\sigma^2C_c\cdot2\gamma_T$.
Combining Steps 1--4, $\E[R(T)]\le\sqrt{2C_cm\sigma^2\beta_TT\gamma_T}+m\tau/T$, where $\gamma_T$ may be replaced by the deterministic bound of \cref{lem:gamma} because that bound depends on the play sequence only through $N=mT$. \qed

\subsection{Proof of \cref{lem:slepian}}
The marginal claim is immediate since $\Sigma_{ii}=1$.
Restrict to the coordinates in $A$ and standardize: $Z_i=(\theta_i-\bar\mu_i)/\sigma_i$ is centered Gaussian with unit variances and correlations $\Sigma_{ij}$.
Slepian's inequality \citep{slepian1962onesided} states that for two centered Gaussian vectors with equal variances and $\Sigma^{(1)}_{ij}\le\Sigma^{(2)}_{ij}$ for all $i\ne j$, $\Prob(Z^{(1)}_i\le c_i\ \forall i)\le\Prob(Z^{(2)}_i\le c_i\ \forall i)$ for every $c$; applied to $-Z^{(1)},-Z^{(2)}$ (same covariances) it gives the same monotonicity for $\{Z_i>c_i\ \forall i\}$.
Taking $\Sigma^{(1)}=I_{|A|}$ and $\Sigma^{(2)}=\Sigma_{AA}$, which is admissible because $\Sigma_{ij}\ge0$, yields the product lower bound; monotonicity in each entry follows from the same comparison. \qed

\subsection{Proof of \cref{lem:temper}}
(i) is the bilinearity of covariance: $\mathrm{Var}(\theta_i-\theta_j)=\sigma_i^2+\sigma_j^2-2\,\mathrm{Cov}(\theta_i,\theta_j)$ and $\mathrm{Cov}(\theta_i,\theta_j)=\sigma_i\sigma_j\Sigma_{ij}$; under independence the same expression has $\Sigma_{ij}=0$, and for $\sigma_i=\sigma_j=\sigma$ the ratio is $2\sigma^2(1-\Sigma_{ij})/(2\sigma^2)$.
(ii) An equicorrelated vector has the representation $\theta=\bar\mu+\sigma(\sqrt\rho\,Z\mathbf 1+\sqrt{1-\rho}\,\epsilon)$ with $Z\sim\mathcal N(0,1)$ independent of $\epsilon\sim\mathcal N(0,I_d)$, since $\mathrm{Cov}(\theta)=\sigma^2(\rho\mathbf 1\mathbf 1^\top+(1-\rho)I_d)$.
The term $\sigma\sqrt\rho\,Z\mathbf 1$ adds the same number to every coordinate and therefore leaves the ordering of the coordinates, hence the top-$m$ set, unchanged almost surely (ties have probability zero).
Thus $\mathrm{top}\text{-}m(\theta)=\mathrm{top}\text{-}m(\bar\mu+\sqrt{1-\rho}\,\sigma\epsilon)$ pathwise, which is the claim. \qed

\subsection{Two consequences of \cref{lem:gamma,thm:bayes}}
\begin{corollary}[Independent prior]\label[corollary]{cor:indep}
For $\Sigma=I_d$ (CTS-Gaussian under \cref{ass:prior}), $\E[R(T)]\le\sqrt{C_c\,m\,\sigma^2\,\beta_T\,T\,d\log(1+\tau^2mT/(d\sigma^2))}+m\tau/T$.
\end{corollary}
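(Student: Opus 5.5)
The corollary should follow by specializing \cref{thm:bayes} to $\Sigma=I_d$ and substituting the ``any correlation matrix'' clause of \cref{lem:gamma}. Under \cref{ass:prior} with $\Sigma=I_d$ the prior is $\mathcal N(\mu_0\mathbf 1,\tau^2I_d)$, and $\GPCTS(I_d)$ is Thompson sampling with independent Gaussian posteriors, i.e.\ CTS-Gaussian under this model. The proof of \cref{thm:bayes} never used structure of $\Sigma$ beyond $\Sigma_1=\tau^2\Sigma$ having unit diagonal scaled by $\tau^2$. That property is what gives $(\Sigma_t)_{ii}\le\tau^2$ in Steps~2 and~4. So the theorem applies verbatim and yields
\[
\E[R(T)]\le\sqrt{2C_cm\sigma^2\beta_TT\gamma_T}+\frac{m\tau}{T}.
\]

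Next I bound $\gamma_T$. With $\Sigma=I_d$ we have $\gamma_T=\tfrac12\log\det(I_d+\sigma^{-2}\tau^2D_T)=\tfrac12\sum_i\log(1+\sigma^{-2}\tau^2n_i)$. By concavity of $\log$ and $\sum_in_i=mT$, this is at most $\tfrac d2\log(1+\tau^2mT/(d\sigma^2))$, which is exactly the second statement of \cref{lem:gamma} with $N=mT$. Substituting into the theorem, the factor $2$ cancels against the $\tfrac12$:
\[
2\gamma_T\le d\log\bigl(1+\tau^2mT/(d\sigma^2)\bigr),
\]
which gives the stated bound.

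The only point needing care is justifying the replacement of the random $\gamma_T$ by a deterministic bound inside the expectation. Step~4 of the theorem's proof already handles this: the bound depends on the play sequence only through $N=mT$, so it holds pathwise and passes through $\E\sqrt{\cdot}$ by monotonicity. I expect no real obstacle beyond checking that the constants ($C_c$, $\beta_T$) are unchanged by the specialization, which they are, since neither depends on $\Sigma$.
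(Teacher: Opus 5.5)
Your proposal is correct and follows the paper's proof, which simply inserts the second (any-correlation-matrix) claim of \cref{lem:gamma} into \cref{thm:bayes}. Your Jensen computation of $\gamma_T=\tfrac12\sum_i\log(1+\sigma^{-2}\tau^2 n_i)$ is that claim specialized to diagonal $\Sigma$, and your pathwise-replacement remark matches the closing sentence of the theorem's proof.
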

\begin{proof}
Insert the second claim of \cref{lem:gamma} into \cref{thm:bayes}.
\end{proof}

\begin{proposition}[Crossover horizon]\label[proposition]{prop:crossover}
Let $T^\star=\sup\{T:\,(1-\rho)\tau^2mT/\sigma^2\le K\log(1+\rho\tau^2mT/(K\sigma^2))\}$.
For every $T\le T^\star$, $\gamma_T\le K\log(1+\rho\tau^2mT/(K\sigma^2))$, so the bound of \cref{thm:bayes} reads $O(\sqrt{C_c}\,m\sqrt{KT\log(dT)\log T})$, and
\[
T^\star\ \le\ \frac{4\rho K\sigma^2}{(1-\rho)^2\tau^2m}.
\]
\end{proposition}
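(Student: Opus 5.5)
The plan has two parts: a structural step showing that the set of horizons in the definition of $T^\star$ is an interval $[0,T^\star]$, so the inequality holds at every $T\le T^\star$; and a direct estimate of $T^\star$ from an elementary bound on $\log(1+x)$.

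\emph{The defining set is an interval.} Abbreviate $a=\rho\tau^2m/\sigma^2$ and $b=(1-\rho)\tau^2m/\sigma^2$, so $N=mT$ gives $\rho\tau^2N/\sigma^2=aT$ and $(1-\rho)\tau^2N/\sigma^2=bT$. Set $g(T)=K\log(1+aT/K)-bT$. This function is concave on $[0,\infty)$ and satisfies $g(0)=0$. Hence $\{T\ge0:g(T)\ge0\}$ is an interval containing $0$. Its right endpoint is $T^\star$, which is finite because $b>0$ when $\rho<1$. So every $T\le T^\star$ satisfies $bT\le K\log(1+aT/K)$.

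\emph{The information-gain bound.} Apply \cref{lem:gamma} and keep the first branch of the $\min$ in its second term:
\[
\gamma_T\le\tfrac K2\log(1+aT/K)+\tfrac12\,bT .
\]
For $T\le T^\star$ the previous step bounds $\tfrac12 bT$ by $\tfrac K2\log(1+aT/K)$, so $\gamma_T\le K\log(1+aT/K)$.

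\emph{The regret rate.} Substitute this into \cref{thm:bayes}, using $\beta_T\le 2m\log d+4\log T=O(m\log(dT))$. For fixed model constants, $\log(1+aT/K)=O(\log T)$. The main term becomes
\[
\sqrt{2C_c\,m\sigma^2\cdot O(m\log(dT))\cdot T\cdot K\,O(\log T)}=O\bigl(\sqrt{C_c}\,m\sqrt{KT\log(dT)\log T}\bigr).
\]
The additive $m\tau/T$ term is lower order.

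\emph{Bounding $T^\star$.} Use $\log(1+x)\le\sqrt x$ for $x\ge0$. This holds because both sides vanish at $0$ and, differentiating, $1/(1+x)\le1/(2\sqrt x)$ is equivalent to $2\sqrt x\le1+x$, which is AM--GM. Then any $T$ in the defining set satisfies
\[
bT\le K\sqrt{aT/K}=\sqrt{KaT},
\]
so $T\le Ka/b^2$. Computing the ratio,
\[
\frac{Ka}{b^2}=\frac{\rho K\sigma^2}{(1-\rho)^2\tau^2m},
\]
which is in fact stronger than the stated bound by a factor $4$. Taking the supremum over the defining set gives the claim.

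The only step requiring care is the interval structure. Without it, "for every $T\le T^\star$" would not follow from the definition of $T^\star$ as a supremum; concavity of $g$ together with $g(0)=0$ resolves this. Everything else is substitution into \cref{lem:gamma} and \cref{thm:bayes}.
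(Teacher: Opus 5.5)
Your proof is correct and follows the same route as the paper's: bound the ridge term of \cref{lem:gamma} by its linear branch, compare it with the cluster term for $T\le T^\star$, and then estimate $T^\star$ with a square-root bound on the logarithm. Your concavity argument makes explicit the interval structure of the defining set, which the paper uses tacitly. Your inequality $\log(1+x)\le\sqrt{x}$ (the paper uses $\log(1+y)\le2\sqrt{y}$) gives $T^\star\le\rho K\sigma^2/((1-\rho)^2\tau^2m)$, which is a factor $4$ sharper than the stated bound and therefore implies it.
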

\begin{proof}
For $T\le T^\star$ the second term of \cref{lem:gamma} is at most $\tfrac12(1-\rho)\tau^2mT/\sigma^2\le\tfrac K2\log(1+\rho\tau^2mT/(K\sigma^2))$, which gives the first claim.
For the second, write $x=(1-\rho)\tau^2mT^\star/\sigma^2$; the defining inequality is $x\le K\log\bigl(1+\tfrac{\rho}{(1-\rho)K}x\bigr)$, and $\log(1+y)\le2\sqrt y$ gives $x\le2K\sqrt{\rho x/((1-\rho)K)}$, i.e.\ $\sqrt x\le2\sqrt{\rho K/(1-\rho)}$, i.e.\ $x\le4\rho K/(1-\rho)$; solving for $T^\star$ gives the bound.
At the experimental values ($d{=}50$, $m{=}5$, $K{=}8$, $\rho{=}0.7$, $\tau^2{=}\nicefrac1{12}$, $\sigma^2{=}\nicefrac14$) the bound is $T^\star\le149$ rounds.
\end{proof}

\subsection{Proof of \cref{prop:gate}}
Given a common posterior, the two draws are $\bar\mu+\mathrm{diag}(\sigma)\xi$ with $\xi\sim\mathcal N(0,\Sigma_w)$ and $\xi\sim\mathcal N(0,I_d)$ respectively; total variation is invariant under the common affine map, so $\dtv=\dtv(\mathcal N(0,\Sigma_w),\mathcal N(0,I_d))\le\tfrac32\|\Sigma_w-I_d\|_F$ by \citet[Thm.~1.1]{devroye2018total}.
Now $\Sigma_w-I_d=w^2(\Sigma-I_d)-(2w-w^2)I_d$, $\|\Sigma-I_d\|_F\le\rho_{\max}\sqrt{d(d-1)}\le\rho_{\max}d$ and $\|(2w-w^2)I_d\|_F\le2w\sqrt d$, giving $\dtv\le\tfrac32w(\rho_{\max}d+2\sqrt d)$ since $w\le1$.
Under zero overlap at every check, $w$ equals $0.7^k$ during the $\Delta_{\mathrm{check}}$ rounds following the $k$-th check, so $\sum_tw_t\le\Delta_{\mathrm{check}}\sum_{k\ge0}0.7^k=\frac{10}{3}\Delta_{\mathrm{check}}$. \qed

\section{Pseudocode}\label{app:alg}
\begin{algorithm2e}[H]
\caption{\CorrFull{}}\label{alg:corrfull}
\SetAlgoLined
\KwIn{$d,m$, warmup $T_w$, kernel parameters $\rho_{\max},\ell$, number of clusters $K$}
$\alpha_i\gets1,\ \beta_i\gets1$ for all $i$; $L\gets I_d$\;
\For{$t=1,\dots,T$}{
  \eIf{$t\le T_w$}{$\theta_i\sim\mathrm{Beta}(\alpha_i,\beta_i)$ independently\;}{
    \If{$t=T_w+1$}{$\mathcal C\gets\textsc{OracleClusterQuery}(\hat\mu,K)$; build $\Sigma$ from \cref{eq:rbf}; $L\gets\mathrm{Cholesky}(\Sigma+10^{-3}I_d)$\;}
    $\theta\gets\bar\mu+\sigma\odot(L\epsilon)$ with $\epsilon\sim\mathcal N(0,I_d)$\;
  }
  play $S_t=\text{top-}m(\theta)$; observe $X_{i,t}$; $\alpha_i\gets\alpha_i+X_{i,t}$, $\beta_i\gets\beta_i+1-X_{i,t}$ for $i\in S_t$\;
}
\end{algorithm2e}

\section{Alternatives: $\GPCTS(\Sigma)$ directly, and corralling}\label{app:alternatives}
$\GPCTS(\Sigma)$ (as run in \cref{sec:ablation}) uses prior $\mathcal N(\tfrac12\mathbf 1,\tau^2\Sigma)$ with $\tau^2=\nicefrac1{12}$ (the variance of a $\mathrm{Beta}(1,1)$ mean) and $\sigma^2=\nicefrac14$ (the Bernoulli variance bound), a CTS warmup of $T_w$ rounds, and thereafter the exact Gaussian posterior computed from all counts.
It is the algorithm \cref{thm:bayes} is about.
Its posterior correlation between two arms decays at rate $1/n$ in their pull counts, which is why the structural effect is transient; \CorrFull{} freezes the correlation and thereby trades Bayesian coherence for a persistent change in the sampling dynamics.
A corralled combination \citep{agarwal2017corralling} of CTS and $\GPCTS(\Sigma)$ would give a consistency--robustness guarantee of the learning-augmented type \citep{lykouris2018competitive}: regret within $\tilde O(\sqrt T)$ of the better base algorithm.
The credibility gate of \RobustCorr{} is a heuristic with the same purpose and only the total-variation guarantee of \cref{prop:gate}; corralling is the right baseline for a follow-up and was not benchmarked here.

\section{Data-adaptive variants}\label{app:variants}
\EdgePrune{} keeps the initial kernel and, every $500$ rounds, zeroes $\Sigma_{ij}$ whenever $|\hat\mu_i-\hat\mu_j|>2\sqrt{1/n_i+1/n_j}$, then refactorizes.
\KMRefine{} blends the initial kernel with a kernel built from $k$-means on $\hat\mu$ at $t\in\{2{,}000,10{,}000\}$ with weight $w(t)=\max(0.2,1-t/12{,}000)$ on the LLM kernel.
Both make the decay of the correlation part of the dynamics rather than leaving it fixed.

\section{Experimental details}\label{app:details}
\paragraph{E1 instances.}
For $(d,m)\in\{25,50\}\times\{3,5\}$: \emph{uniform-large} (top $m$ at $0.7$, rest $0.55$), \emph{uniform-small} (rest $0.62$), \emph{hard-gap} (top $m$ at $0.7$, next $m$ at $0.65$, rest $0.5$), \emph{staggered} (means linearly spaced from $0.7$ to $0.3$); $T=2{,}500$, 20 seeds.
\paragraph{E2.} The 16 E1 families plus 16 with fresh environment seeds; $T=25{,}000$, 15 seeds.
\paragraph{E3.} MIND-small \citep{wu2020mind}, $17{,}226$ articles with empirical CTRs; each instance samples $d=200$ articles; three user profiles (``diverse'', ``entertainment'', ``tech'') reweight CTRs by category with weights fixed before any run; environment seeds $\{100,200,300\}$; $T=10{,}000$, 15 seeds; oracle Claude Sonnet~4.6 (AWS Bedrock, temperature $0.3$) receiving titles and $\hat\mu$ and asked for $K=8$ topic clusters.
\paragraph{Hyperparameters.} $T_w=30$, $K=8$, $\rho_{\max}=0.7$, $\ell=0.5$; \RobustCorr{} $\Delta_{\mathrm{check}}=100$; RandomCorr within-cluster $\rho=0.6$; recommendation mixing $\tau=100$, re-query every 100 rounds, one pseudo-success per recommended arm at each query; warm start $5$ pseudo-successes.
$\rho_{\max},\ell,K,T_w$ were selected on $(d,m)\in\{(25,3),(50,5)\}\times\{\text{uniform-large},\text{hard-gap}\}$ with environment seeds $700$--$704$, disjoint from all test seeds.
Sensitivity: reductions on E1 stay within $15$--$20\%$ for $\rho_{\max}\in[0.5,0.8]$ and $\ell\in[0.5,1.0]$; $\rho_{\max}=0.9$ and $\ell=0.25$ or $2.0$ lose $2$--$4$ points; $T_w\in\{10,100\}$ loses $2$--$4$ points; $K\in\{6,\dots,12\}$ is flat.
\paragraph{Simulated-oracle ablation.}
Same instances and seeds as E1; oracle label noise $\varepsilon\in\{0,\nicefrac14,\nicefrac12,\nicefrac34,1\}$; the adversarial partition places the true top-$m$ and the $m$ worst arms in one cluster and splits the remaining arms into $K-1$ quality-sorted groups, with cluster order by $\hat\mu$ as in \cref{eq:rbf} (in the E1 adversarial test the merged cluster was additionally forced to rank first); all samplers share the CTS warmup and the kernel of \cref{eq:rbf} with the E1 hyperparameters.
The tempered control draws $\theta=\bar\mu+c\,\sigma\odot\epsilon$ with independent $\epsilon$.
Each trial takes $0.02$--$0.2$\,s on one CPU core; the full sweep is about two minutes on ten cores.
\paragraph{Compute.} E1--E3 ran on a 16-core machine ($3$\,s, $25$\,s and $12$\,s per trial respectively); total LLM cost about \$12; LLM responses were cached.

\section{Tables for the LLM-free ablation}\label{app:abltables}
\Cref{tab:verify} checks the re-implementation, \cref{tab:kernelshape} is the kernel-shape ablation with the tempered controls, \cref{tab:sweep} tabulates \cref{fig:sweep}a, \cref{tab:long} tabulates \cref{fig:sweep}b, and \cref{tab:abl-perdm} gives the per-$(d,m)$ breakdown.
\begin{table}[htbp]
\caption{Per-$(d,m)$ mean regret in the LLM-free ablation ($T{=}2{,}500$; 4 gap families $\times$ 20 seeds $=80$ trials per cell).}
\label{tab:abl-perdm}
\centering\small
\begin{tabular}{@{}lcccc@{}}
\toprule
Sampler & $(25,3)$ & $(25,5)$ & $(50,3)$ & $(50,5)$ \\
\midrule
CTS & 256.7 & 284.4 & 442.2 & 492.6 \\
RBF kernel, random partition/order & 221.4 & 257.2 & 363.9 & 429.8 \\
\CorrFull{} ($\varepsilon{=}0$) & 225.0 & 247.3 & 364.7 & 434.3 \\
\CorrFull{} ($\varepsilon{=}0.5$) & 223.4 & 233.8 & 338.9 & 413.4 \\
\CorrFull{} ($\varepsilon{=}1$) & 221.7 & 240.5 & 353.2 & 427.3 \\
GP-CTS($\Sigma$) ($\varepsilon{=}0$) & 250.6 & 257.2 & 421.5 & 488.4 \\
\RobustCorr{} ($\varepsilon{=}0$) & 250.6 & 247.9 & 390.3 & 438.6 \\
Warm start ($\varepsilon{=}0$) & 256.1 & 280.4 & 452.7 & 496.1 \\
Rec.-mixing ($\varepsilon{=}0$) & 254.2 & 239.1 & 431.2 & 466.0 \\
Tempered CTS-Gaussian ($c{=}0.6$) & 196.8 & 241.3 & 281.6 & 356.5 \\
\bottomrule
\end{tabular}
\end{table}

\begin{table}[htbp]
\caption{Re-implementation check of the LLM-free rows of Experiment~1 (16 configurations $\times$ 20 seeds, $T{=}2{,}500$). ``Reported'' are the numbers from the LLM experiment harness; ``re-run'' are from an independent re-implementation of the same instance family and algorithms. Cells are mean cumulative regret; the four right-hand columns are the per-$(d,m)$ means.}
\label{tab:verify}
\centering\small
\begin{tabular}{@{}llcccccc@{}}
\toprule
Algorithm & Source & Overall & $(25,3)$ & $(25,5)$ & $(50,3)$ & $(50,5)$ \\
\midrule
CTS & reported & 330.7 & 224.5 & 223.8 & 415.7 & 458.8 \\
 & re-run & 369.0 & 256.7 & 284.4 & 442.2 & 492.6 \\
CUCB & reported & 776.3 & 546.4 & 560.6 & 933.2 & 1065.2 \\
 & re-run & 735.8 & 539.0 & 583.4 & 800.2 & 1020.7 \\
RandomCorr (block, $\rho{=}0.6$) & reported & 307.2 & 210.9 & 216.7 & 386.8 & 414.4 \\
 & re-run & 351.2 & 253.2 & 261.4 & 407.8 & 482.4 \\
\bottomrule\end{tabular}\end{table}

\begin{table}[htbp]
\caption{Kernel-shape versus partition-information ablation (LLM-free; $n{=}320$ paired trials per row, $T{=}2{,}500$). All RBF rows use the \CorrFull{} kernel with identical $(\rho_{\max},\ell,K)=(0.7,0.5,8)$ and differ only in what the partition and cluster order contain. ``vs CTS'' is the paired mean regret reduction; W/L are paired wins/losses; $p$ is a two-sided Wilcoxon signed-rank test.}
\label{tab:kernelshape}
\centering\footnotesize
\resizebox{\linewidth}{!}{%
\begin{tabular}{@{}lccccc@{}}
\toprule
Sampler & Partition & Cluster order & Regret & vs CTS & W/L ($p$) \\
\midrule
CTS & --- & --- & 369.0 $\pm$ 6.6 & --- & --- \\
Block-diagonal (RandomCorr) & random & none & 351.2 $\pm$ 6.7 & +4.8\% & 202/118 (${<}10^{-4}$) \\
RBF kernel & random & random & 318.1 $\pm$ 6.6 & +13.8\% & 239/81 (${<}10^{-4}$) \\
RBF kernel & random & by $\hat\mu$ & 309.9 $\pm$ 6.4 & +16.0\% & 248/72 (${<}10^{-4}$) \\
RBF kernel & oracle, $\varepsilon{=}1$ & by $\hat\mu$ & 310.7 $\pm$ 6.6 & +15.8\% & 250/70 (${<}10^{-4}$) \\
RBF kernel & oracle, $\varepsilon{=}0.5$ & by $\hat\mu$ & 302.4 $\pm$ 6.2 & +18.0\% & 252/68 (${<}10^{-4}$) \\
RBF kernel & oracle, $\varepsilon{=}0$ & by $\hat\mu$ & 317.8 $\pm$ 6.7 & +13.9\% & 239/81 (${<}10^{-4}$) \\
RBF kernel & adversarial & by $\hat\mu$ & 316.4 $\pm$ 6.6 & +14.2\% & 242/78 (${<}10^{-4}$) \\
RBF + credibility gate & adversarial & by $\hat\mu$ & 338.3 $\pm$ 6.7 & +8.3\% & 208/112 (${<}10^{-4}$) \\
\midrule
Tempered CTS-Gaussian ($c{=}0.8$) & none & none & 302.8 $\pm$ 6.1 & +17.9\% & 247/73 (${<}10^{-4}$) \\
Tempered CTS-Gaussian ($c{=}0.6$) & none & none & 269.0 $\pm$ 7.1 & +27.1\% & 273/47 (${<}10^{-4}$) \\
Tempered CTS-Gaussian ($c{=}0.4$) & none & none & 316.0 $\pm$ 11.0 & +14.4\% & 220/100 (${<}10^{-4}$) \\
\bottomrule\end{tabular}}\end{table}

\begin{table}[htbp]
\caption{Horizon dependence (LLM-free; 16 configurations $\times$ 10 seeds, $T{=}25{,}000$). Paired regret reduction versus CTS at intermediate horizons, read off the cumulative-regret curves of the same runs; the last column is the final W/L.}
\label{tab:long}
\centering\small
\begin{tabular}{@{}lrrrrrc@{}}
\toprule
Sampler & $T{=}2.5$k & $5$k & $10$k & $20$k & $25$k & W/L \\
\midrule
RBF kernel, random partition/order & +15.1\% & +14.2\% & +10.1\% & +4.3\% & +3.1\% & 114/46 \\
\CorrFull{} ($\varepsilon{=}0$) & +14.9\% & +12.6\% & +9.0\% & +5.0\% & +3.1\% & 113/47 \\
\CorrFull{} ($\varepsilon{=}0.5$) & +19.6\% & +18.8\% & +15.5\% & +11.1\% & +9.6\% & 122/38 \\
GP-CTS($\Sigma$) ($\varepsilon{=}0$) & +2.2\% & +0.5\% & -1.9\% & -4.0\% & -4.6\% & 65/94 \\
\RobustCorr{} ($\varepsilon{=}0$) & +10.8\% & +5.1\% & -1.6\% & -5.8\% & -6.6\% & 82/78 \\
Tempered CTS-Gaussian ($c{=}0.6$) & +29.1\% & +25.8\% & +19.2\% & +6.6\% & +1.3\% & 112/48 \\
\bottomrule\end{tabular}\end{table}

\begin{table}[htbp]
\caption{Structure versus belief injection at matched oracle quality (LLM-free; $n{=}320$ paired trials per cell, $T{=}2{,}500$). Every column receives the \emph{same} simulated partition; $\varepsilon$ is the fraction of arms whose cluster label is replaced by a uniformly random one ($\varepsilon{=}0$: quality-sorted partition; $\varepsilon{=}1$: random). NMI is against the top-$m$/rest partition (for reference, GPT-5.4 clusters in Experiment~1 had NMI $0.155$; random $0.10$). Cells are paired regret reduction versus CTS (positive is better); W/L in brackets.}
\label{tab:sweep}
\centering\footnotesize
\resizebox{\linewidth}{!}{%
\begin{tabular}{@{}lcccccc@{}}
\toprule
$\varepsilon$ & NMI & \CorrFull{} & GP-CTS($\Sigma$) & \RobustCorr{} & Warm-start (belief) & Rec.-mixing (belief) \\
\midrule
0.00 & 0.218 & +13.9\% [239/81] & +3.9\% [189/131] & +10.1\% [216/104] & -0.6\% [155/165] & +5.8\% [210/110] \\
0.25 & 0.172 & +15.3\% [249/71] & +2.0\% [175/145] & +12.1\% [224/96] & -2.2\% [143/177] & -0.1\% [171/149] \\
0.50 & 0.139 & +18.0\% [252/68] & +1.7\% [179/141] & +11.9\% [231/89] & -0.8\% [153/167] & -2.7\% [143/177] \\
0.75 & 0.104 & +16.6\% [257/63] & -0.0\% [152/168] & +12.4\% [223/96] & -0.3\% [158/162] & -8.6\% [105/215] \\
1.00 & 0.097 & +15.8\% [250/70] & +1.8\% [171/149] & +11.3\% [231/89] & -0.6\% [160/160] & -10.5\% [88/232] \\
\bottomrule\end{tabular}}\end{table}

\section{Additional tables for the LLM experiments}\label{app:llmtables}

\begin{table}[htbp]
\caption{E1 mean regret by $(d,m)$ (80 trials per cell) and E3 regret reduction by user profile ($n{=}45$ per cell).}
\label{tab:perdm}
\centering\small
\begin{tabular}{@{}lcccc@{}}
\toprule
E1 & $(25,3)$ & $(25,5)$ & $(50,3)$ & $(50,5)$ \\
\midrule
\CorrFull{} & 192.0 & 195.1 & 312.5 & 373.3 \\
\RobustCorr{} & 191.3 & 198.6 & 351.0 & 395.8 \\
Rec.-mixing & 214.1 & 219.8 & 393.3 & 543.8 \\
RandomCorr & 210.9 & 216.7 & 386.8 & 414.4 \\
CTS & 224.5 & 223.8 & 415.7 & 458.8 \\
CUCB & 546.4 & 560.6 & 933.2 & 1065.2 \\
\midrule
E3 & Diverse & Entmt. & Tech & \\
\midrule
\CorrFull{} & $+41.4\%$ & $+41.5\%$ & $+40.4\%$ & \\
\KMRefine{} & $+41.2\%$ & $+41.5\%$ & $+40.3\%$ & \\
\EdgePrune{} & $+40.5\%$ & $+39.6\%$ & $+40.0\%$ & \\
Warm-start & $-1.6\%$ & $-0.3\%$ & $-1.4\%$ & \\
\bottomrule
\end{tabular}
\end{table}

\paragraph{E2 by split.} Mean regret on the original / held-out families ($n{=}240$ each): \EdgePrune{} $558.0$ / $598.9$, \CorrFull{} $577.5$ / $598.3$, CTS $614.7$ / $617.1$; \KMRefine{} was run on both splits jointly ($573.8$).
Pilot alternatives were at or below CTS at this horizon: per-arm damping $613.6$ ($+0.4\%$), kernel decay toward $I_d$ $623.8$ ($-1.3\%$), a hierarchical-Bayes model over cluster means $630.5$ ($-2.4\%$).
\paragraph{Endogenous oracle quality.}\label{sec:endogenous} On the $d{=}25,m{=}5$ family (20 seeds), the LLM's best cluster covered $4$ of $5$ true top-$m$ slots when $\hat\mu$ came from 30 rounds of CTS warmup, but $0.15$ of $5$ on average when the same 30 rounds were round-robin, with only the numbers in the prompt differing; oracle quality is therefore endogenous to the learner's trajectory, which is why we do not model it as an exogenous constant.
\paragraph{E1 ablations.} \CorrFull{} beats RandomCorr by $12.7\%$ ($p<10^{-4}$, 228/92); a block-diagonal kernel on the LLM partition (within-cluster correlation only, no rank ordering) does not beat RandomCorr ($p=0.34$).
Under adversarial clusters (true top-$m$ grouped with the $m$ worst arms, ranked first) \CorrFull{} incurs $385.2$ against $330.7$ for CTS ($+16.5\%$ regret) and \RobustCorr{} $336.8$ ($+1.8\%$).
\Cref{sec:ablation} shows that the margin over RandomCorr is explained by kernel shape.

\paragraph{Cluster quality.}
NMI between the LLM partition and the top-$m$/rest partition over the $1{,}280$ E1 queries was $0.155\pm0.002$ (random: $0.10$), ranging from $0.09$ at $(50,3)$ to $0.23$ at $(25,5)$.
NMI rose with model capability on the validation grid (GPT-4.1-mini $0.41$, GPT-5-mini $0.44$, GPT-5.4 $0.52$).
Given \cref{sec:ablation}, the E1 gain is not read as evidence of partition quality; the NMI numbers are reported for completeness.

\paragraph{Cost.}
One cluster query is about $620$ tokens (\$0.003 on GPT-5.4); recommendation mixing used about 25 queries per E1 trial.
Per round, \CorrFull{} costs one $O(d^2)$ matrix--vector product plus a one-time $O(d^3)$ Cholesky ($<50$\,ms at $d=200$); ESCB-style index maximization over $\binom dm$ subsets is infeasible at $d=200,m=5$ without additional structure.

\clearpage
\section{Additional figures}\label{app:figs}

\begin{figure}[htbp]
\centering
\includegraphics[width=\linewidth]{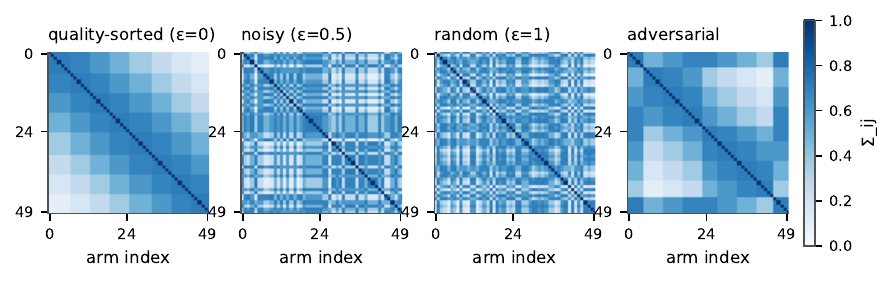}
\caption{The correlation matrix $\Sigma$ of \cref{eq:rbf} ($\rho_{\max}{=}0.7$, $\ell{=}0.5$, $K{=}8$) on a $d{=}50$ staggered instance, arms sorted by true mean, for a quality-sorted partition, a noisy one ($\varepsilon{=}0.5$), a random one, and the adversarial construction. Every matrix has unit diagonal and $\lambda_{\min}\ge1-\rho_{\max}$; only the placement of the off-diagonal mass changes, which is what \cref{sec:ablation} shows matters little on unstructured instances.}
\label{fig:kernels}
\end{figure}

\begin{figure}[htbp]
\centering
\includegraphics[width=\linewidth]{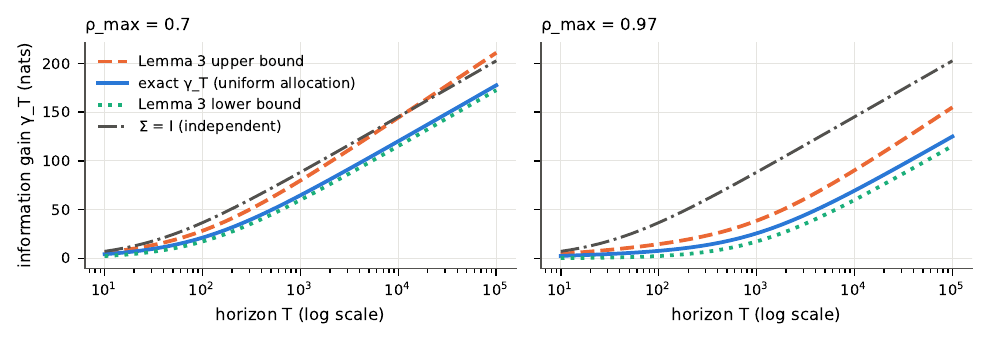}
\caption{Numerical check of \cref{lem:gamma}: exact information gain $\gamma_T$ of the kernel of \cref{fig:kernels} under a uniform allocation ($\tau^2{=}\nicefrac1{12}$, $\sigma^2{=}\nicefrac14$, $m{=}5$) against the two-term upper bound, the lower bound, and the independent-prior value. At $\rho_{\max}{=}0.7$ the curves merge with the independent case within a few hundred rounds; at $\rho_{\max}{=}0.97$ the structural regime is visibly longer, as \cref{prop:crossover} predicts.}
\label{fig:infogain}
\end{figure}

\begin{figure}[htbp]
\centering
\begin{minipage}[t]{0.48\linewidth}\centering\includegraphics[width=\linewidth]{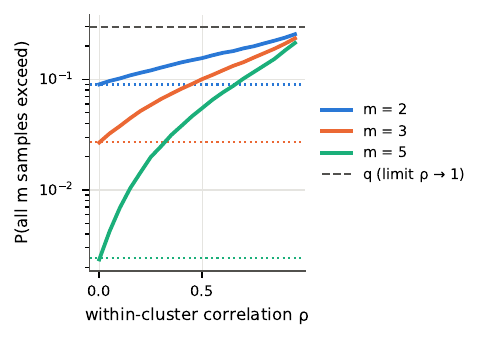}\end{minipage}\hfill
\begin{minipage}[t]{0.48\linewidth}\centering\includegraphics[width=\linewidth]{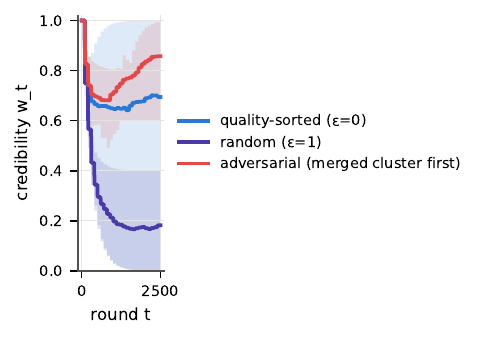}\end{minipage}
\caption{Left: \cref{lem:slepian} numerically. Probability that all $m$ equicorrelated standard Gaussian samples exceed the $q{=}0.3$ upper quantile, as a function of the correlation $\rho$ ($2\times10^5$ Monte Carlo draws); dotted lines are the independent values $q^m$, the dashed line the limit $q$. Right: credibility $w_t$ of \RobustCorr{} on the $d{=}50,m{=}5$ hard-gap family (10 seeds; band is the min--max range) for a quality-sorted partition, a random one, and the adversarial construction with the merged cluster ranked first. The decay is geometric at the checks every $\Delta_{\mathrm{check}}{=}100$ rounds as in \cref{prop:gate}, but the overlap statistic compares the oracle's best cluster with the top-$m$ of the \emph{raw} empirical means, which rarely pulled arms with $\hat\mu_i\approx1$ enter; the gate therefore also decays under a good partition (the cost visible in \cref{tab:kernelshape,tab:sweep}), and it does not fire when the merged adversarial cluster contains the true top-$m$. A lower-confidence-bound statistic would remove both effects.}
\label{fig:slepiangate}
\end{figure}

\begin{figure}[htbp]
\centering
\includegraphics[width=\linewidth]{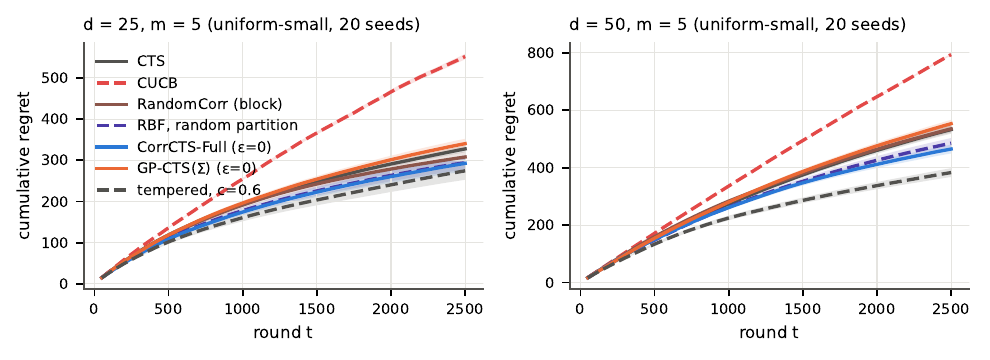}
\caption{LLM-free cumulative regret on the uniform-small family (20 seeds; bands are one standard error) for the independent re-implementation of \cref{sec:ablation}: the RBF kernel on a random partition and \CorrFull{} on the quality-sorted partition are nearly indistinguishable, the tempered sampler is lowest at this horizon, and the exact Bayesian $\GPCTS(\Sigma)$ tracks CTS.}
\label{fig:appcurves}
\end{figure}

\begin{figure}[htbp]
\centering
\includegraphics[width=\linewidth]{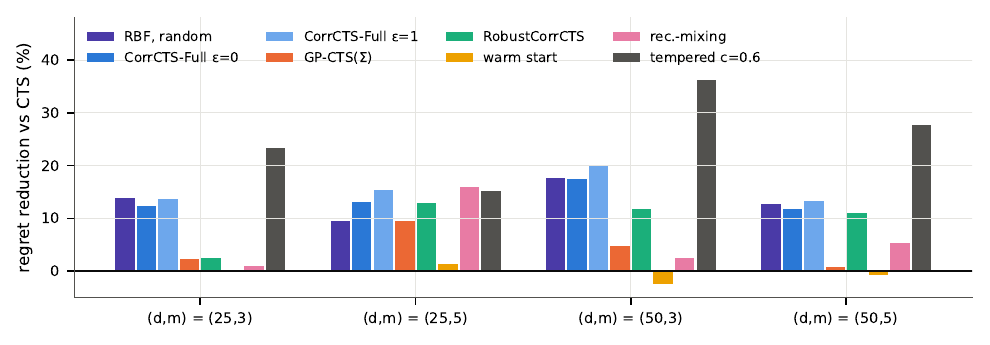}
\caption{Per-$(d,m)$ paired regret reduction versus CTS at $T{=}2{,}500$ for the samplers of \cref{sec:ablation} (80 trials per cell). The ordering is the same in every cell: tempering and the RBF samplers (whatever the partition) gain the most, the exact Bayesian sampler and the belief-injection baselines gain little or nothing.}
\label{fig:appperdm}
\end{figure}

\begin{figure}[htbp]
\centering
\includegraphics[width=0.5\linewidth]{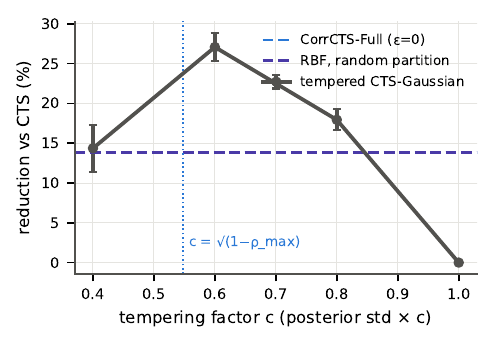}
\caption{Tempering sweep. Paired regret reduction of independent CTS-Gaussian with posterior standard deviation scaled by $c$ ($n{=}320$; $c{=}1$ is CTS-Gaussian), with the two RBF samplers as dashed references and the value $c=\sqrt{1-\rho_{\max}}$ of \cref{lem:temper}(ii) marked.}
\label{fig:apptemper}
\end{figure}

\begin{figure}[htbp]
\centering
\includegraphics[width=0.5\linewidth]{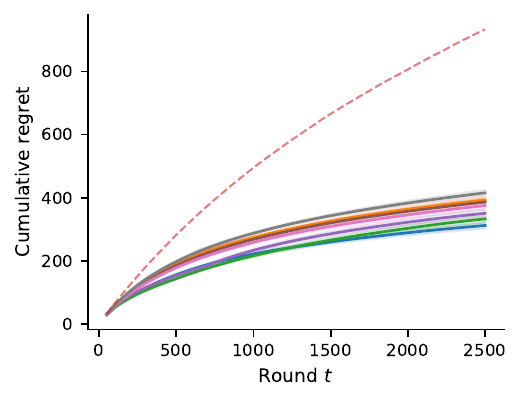}\\[2pt]
{\small \key{1f77b4}{\CorrFull{}}\quad \key{2ca02c}{\RobustCorr{}}\quad \key{ff7f0e}{Rec.-mixing}\quad \key{8c564b}{RandomCorr}\quad \key{e377c2}{block-diagonal}\quad \key{7f7f7f}{CTS}\quad \keydashed{d62728}{CUCB}}
\caption{E1 cumulative regret (LLM partitions) on the uniform-small family at $d{=}50$, $m{=}3$ (20 seeds; bands are one standard error), complementing \cref{fig:curves}.}
\label{fig:appm3}
\end{figure}

\end{document}